\theoremstyle{thmstyleone}%
\newtheorem{theorem}{Theorem}
\newtheorem{proposition}[theorem]{Proposition}%
\theoremstyle{thmstyletwo}%
\newtheorem{remark}{Remark}%
\newtheorem{cor}{Corollary}
\theoremstyle{thmstylethree}%
\begin{document}

\title[Network disorder and the turning distance]{Polygonal network disorder and the turning distance}


\author[1]{\fnm{Alexander} \sur{Dolce}}

\author[2]{\fnm{Ryan} \sur{Lavelle}}

\author[2]{\fnm{Bernard} \sur{Scott}}

\author[1]{\fnm{Ashlyn} \sur{Urbanski}}

\author*[1]{\fnm{Joseph} \sur{Klobusicky}}\email{joseph.klobusicky@scranton.edu}

\affil[1]{\orgdiv{Department of Mathematics}, \orgname{University of Scranton}, \orgaddress{\street{800 Linden St.}, \city{Scranton}, \postcode{18510}, \state{Pennsylvania}, \country{USA}}}

\affil[2]{\orgdiv{Department of Computer Science}, \orgname{University of Scranton}.}


\abstract{ 

The turning distance is a well-studied metric for measuring the similarity between two polygons.  This metric is constructed by taking an $L^p$ distance between step functions which track each shape's tangent angle of a path tracing its boundary. In this study, we introduce \textit{turning disorders} for polygonal planar networks, defined by averaging turning distances between network faces and ``ordered" shapes (regular polygons or circles).   We derive closed-form expressions of turning distances for special classes of regular polygons, related to the divisibility of $m$ and $n$, and also between regular polygons and circles.  These formulas are used to show that the time for computing the  2-turning distances reduces to $O((m+n) \log(m+n))$ when both shapes are regular polygons, an improvement from $O(mn\log(mn))$ operations needed to compute distances between general polygons of $n$ and $m$ sides. We also apply these formulas to several examples of network microstructure with varying disorder.  For Archimedean lattices, a class of regular tilings, we can express turning disorders with exact expressions. We also consider turning disorders applied to two examples of stochastic processes on networks: spring networks evolving under T1 moves and polygonal rupture processes.  We find that the two aspects of defining different turning disorders, the choice of ordered shape and whether to apply area-weighting, can capture different notions of network disorder.
}

\keywords{shape recognition, turning distance, microstructure, T1 move}



\maketitle

\section{Introduction}

In the field of computer vision, a popular topic of research focuses on how to measure the similarity between two shapes.  For example, measures have been derived for comparing shapes to commonly found structures such as cubes \cite{martinez2009measuring}, rectangles \cite{rosin1999measuring}, and circles \cite{reeve1979calibration}.  In two dimensions, Arkin et al.\ \cite{arkin1989efficiently} provided a general metric, the \textit{turning distance}, for comparing shapes by using $L^p$ distances between \textit{turning functions} which track tangent vectors along paths tracing each shape's boundary. In polygons, turning functions are step functions, and it was found that turning distance under the $L^2$ norm can be computed  with a run time of $O(mn \log(mn))$ for comparing two polygons with $m$ and $n$ sides.  In this paper, we will employ the turning function to define a notion of disorder in planar polygonal networks.  We will frequently refer to such networks as microstructure, since approximately polygonal networks often appear in the study of grain boundary coarsening of polycrystalline metals in materials science \cite{henseler2003reduced,epshteyn2022stochastic,elsey2015mean}.  However, such patterns in nature are found on a variety of scales, ranging from pottery crazing \cite{rice2015pottery} to Martian terrains \cite{seibert2001small}.  

While the notion of ``disorder" can have multiple interpretations,  here we will consider disorder of a network as an average of turning distances, taken between cells of the network and some predetermined ``ordered shape".  For a polygon $P$ with $n$ sides,  the three choices we will use for an  ordered shape are the circle $C$, the regular hexagon $R_6$, and a regular polygon $R_n$ with the same number of sides as $P$.  For each of these measures, we also consider disorders with area-weighted averaging. While the various types of disorder appear to differ minimally in their definitions, we nonetheless find that these measures can vary widely given different examples of microstructures.  In general, however, all of the disorders assign higher disorder to microstructures with nonconvex and irregular cells.

In Sec.\ \ref{sec:turndef}, we review the turning distance, and subsequently introduce the various turning disorders, along with some of their basic properties.  We delay an analysis of examples of the turning disorder, and instead focus first on formulas of turning distances where one of the shapes is either a regular polygon or a circle.  Sec.\ \ref{sec:compturn} gives computationally efficient formulas for computing the turning distance.  In the case of comparing two regular polygons with $n$ and $m$ sides, the turning distance formula can be simplified to a summation formula needing only $O((m+n) \log(m+n))$ operations. For Sec.\ \ref{sec:compspec}, further simplifications can be made when considering the divisibility between the number of sides of the regular polygons.  In several cases, the turning distance can be written in closed form.  A simple summation formula is also provided for comparing any polygon to a circle.  This also reduces to closed form when the circle is compared to a regular polygon.  

The formulas derived in Secs.\ \ref{sec:turndef} and \ref{sec:compturn} can be used for efficiently computing several examples of microstructure.  In Sec.\ \ref{sec:arch}, we consider an ordered example of Archimedean lattices, a class of microstructures comprised of tiles which are regular polygons. Archimedean lattices arise in nanoscale science, including problems in self-assembly \cite{kulkarni2020archimedean} and quantum magnets \cite{farnell2014quantum}.   For soap foams with millimeter scale cells, Bae et al. \cite{bae2019controlled} gave methods for generating two-dimensional foams with Archimedean lattice structure. We write explicit expressions for each of the turning distance for the $(4,8^2), (4,6,12),$ and $(3, 12^2)$ lattices.  Sec.\ \ref{sec:dis} studies two classes of stochastic processes on networks that produce increasingly disordered structures.  In Sec.\ \ref{subsec:t1}, we consider linear spring networks in force balance.  To evolve the network, we apply repeated applications of the T1 move, a simple topological transition which changes the connectivity of a network edge.  A purely topological treatment of network statistics for the random T1 process was undertaken in the theoretical physics community \cite{aste1999glass,sherrington2002glassy} as a generalized version of the Ising model on planar graphs, and was found to exhibit glassy behavior under Metropolis-Hastings rules for rejecting certain configurations.  By allowing the spring network to equilibrate after each T1 transition, we are now able to study the geometric properties of the spring network, and for our purposes we are able to compute turning disorders.
  In Sec.\ \ref{subsec:rupture}, we randomly remove edges in a hexagonal lattice in a process topologically similar to edge rupturing in quasi-two-dimensional soap foams \cite{klobusicky2024planar}.  In this process, vertices are fixed and do not undergo equilibration after ruptures.  This produces irregular, nonconvex shapes which result in large turning disorders.   In Sec.\ \ref{sec:disc}, we discuss potential areas in which turning measures can be applied to the problem of classification of materials.

\section{A measure of disorder for microstructures} \label{sec:turndef}

A popular method for comparing shapes in two dimensions, proposed by Arkin et. al. \cite{arkin1989efficiently}, tracks the tangent vector of a shape as it traces its boundary.  A shape $A$ can be scaled so that its  perimeter is unit length, with its boundary traversed by a continuous curve $\varphi(s): [0,1] \rightarrow \mathbb{R}^2$ parametrized by arc length.   The \textit{turning function} $\Theta_A(s): [0,1] \rightarrow \mathbb{R}$ is the counterclockwise angle of the tangent vector $\varphi'(s)$ with respect to $x$-axis.  When the curve is only piecewise $C^1$, which occurs for polygons, we can make the turning function well-defined by requiring the function to be right continuous.  In what follows, the turning function will be extended to have a domain of $\mathbb R$ by imposing the identity
  \begin{equation}
  	\Theta_A(s  \pm  1) = \Theta_A(s) \pm 2 \pi. \label{periodic}
  \end{equation}
The floor value of $|s|$ is then interpreted as the number of complete traversals of the perimeter, while the sign of $s$ determines whether the traversal is clockwise or counterclockwise.

The \textit{p-turning distance} computes the $p$-norm between turning functions of two shapes, minimized over all possible rotations and starting points of the tangent vector.  For shapes $A$ and $B$, this takes the form
\begin{equation}
 d_p(A,B) = \left( \min_{\theta \in \mathbb R, t \in [0,1]} \int_0^1 |\Theta_A(s+t)- \Theta_B(s)+ \theta |^p ds\right) ^ \frac{1}{p}. \label{maindist}
\end{equation}
Note that we will also write $d_p(f,g)$ when comparing two turning functions $f,g \in L^p[0,1]$. The turning distance carries several desirable properties.  Chief among them is that for $p \ge 1$,  $d_p(\cdot, \cdot)$ is a well-defined metric among the space of simple closed curves, quotiented under translations, rotations, and scaling\footnote{In Lemma 1 of \cite{arkin1989efficiently}, a proof is given that $d_p$ is a metric for any $p>0$. However, the proof depends on the Minkowski inequality for $L^p$ spaces, which does not in general hold for $p \in (0,1)$.}. Thus, the metric is only concerned with the \textit{shape} of each curve.  It is also continuous with respect to perturbations of shapes.  In the Appendix, we show that continuity is locally Lipschitz for the case $p = 1$ for perturbing vertices, but that simple counterexamples exist for $p > 1$ in which Lipschitz continuity does not hold.

We now consider a measure of disorder for a microstructure contained in an open connected domain $\Omega \subset \mathbb R^2$ with unit area, partitioned as $\overline \Omega = \sqcup_{i = 1}^n \overline P_i$, where each $P_i$ is an open, connected domain whose boundary defines a simple closed curve.  The boundaries of the partition form a planar network $G$, where the side number $s(P_i)$ of each face is given by the number of vertices contained in its boundary. For such a network, we will define \textit{turning disorders} as an average of $d_2(P_i, A_i)$ for $i = 1, \dots, n$,  where $A_i$ is some predetermined ``ordered" shape.  Below, we will select three natural choices for $A_i$, and also consider weighting by cell area, which produces six measures for disorder.

One option for an ordered shape is to let $A_i = R_k$, where $R_k$ is a regular $k$-sided polygon.  For a polygon $P$ with $s(P)$ sides, we can compare against all $R_{k}$ for $k \ge 3$ and take a minimum of distances. However, a less computationally expensive method would compare $P$ to a regular polygon which also has $s(P)$ sides. In the same vein, we define the unweighted and weighted versions of the  \textit{regular turning disorder} as\begin{equation}
\mathcal D(G) = \frac{1}{n}\sum_{i=1}^{n}d_2(P_i,R_{s(P_i)}), \qquad \mathcal D_\mathrm{w}(G) = \sum_{i=1}^{n}|P_i|d_2(P_i,R_{s(P_i)}), \label{regdist}
\end{equation}
where $|P|$ denotes the area of a polygon $P$.  Note that because we assumed unit area for the network, the weighted disorder is indeed an area-weighted average of turning distances.

Trivalent networks are graphs that have degree three at each vertex, and are commonly found in two-dimensional microstructures such as cross sections of polycrystalline metals \cite{van2000grain} and quasi-two-dimensional foams  \cite{weaire1999physics}.  From Euler's theorem for planar networks, the average number of sides is approximately six for large networks.  Considering hexagonal lattices as the most ordered of trivalent networks, we can choose the regular hexagon $R_6$ as our comparison cell, and define unweighted and weighted \textit{hexagonal turning disorders} as

\begin{equation}
\mathcal D_6(G) = \frac{1}{n}\sum_{i=1}^{n}d_2(P_i,R_{6}), \qquad \mathcal D_\mathrm{6,w}(G) = \sum_{i=1}^{n}|P_i|d_2(P_i,R_{6}). \label{hexdist}
\end{equation}

Finally, we may be interested in comparing networks to microstructures such as wet foams whose areas are largely comprised of approximately circular cells \cite{furuta2016close}.  In this case, we define the \textit{circular turning disorder} as the average of turning distances between each cell and the circle $C$. The weighted and unweighted versions of this disorder are
\begin{equation}
 \mathcal{D}_{\mathrm{c}}(G) = \frac{1}{n}\sum_{i=1}^{n}d_2(P_i,C), \qquad \mathcal{D}_{\mathrm{c},\mathrm{w}}(G)= \sum_{i=1}^{n}|P_i|d_2(P_i,C). \label{circdist}
\end{equation}
As we will see in Sec. \ref{sec:compturn}, we have chosen the 2-distance $d_2$ in (\ref{regdist})-(\ref{circdist}) for reasons of computational efficiency.

There is no ordering among the disorders in (\ref{regdist})-(\ref{circdist}) that holds across all microstructures.  Indeed,  the turning distances of regular polygons and circles compared against even simple classes of shapes can produce interesting behavior.  For one example, shown in Fig. \ref{ar_plot}, we plot turning distances of rectangles $P_a$ with aspect ratio $a \in [1,20]$ against $R_4, R_6$, and $C$. There are multiple changes in the orderings of turning distances:
\begin{align*}
    &d_2(R_4, P_a)<d_2(C,P_a)<d_2(R_6, P_a) \qquad 1 \le a \le 1.51, \\
    &d_2(R_6, P_a)<d_2(C,P_a)<d_2(R_4, P_a) \qquad 1.52 < a < 2.80, \\
        &d_2(C,P_a)<d_2(R_6, P_a)<d_2(R_4, P_a) \qquad 2.81 < a < 6.20, \\
        &d_2(C,P_a)<d_2(R_4, P_a)<d_2(R_6, P_a) \qquad 6.21 < a < 8.46,\\
        &d_2(R_4, P_a)<d_2(C,P_a)<d_2(R_6, P_a) \qquad 8.47 < a < 13.20,\\
        &d_2(R_4, P_a)<d_2(R_6, P_a)<d_2(C,P_a) \qquad a > 13.21.\\
\end{align*}
The  plot for $d_2(R_6, P_a)$ also shows that changing aspect ratios does not ensure monotonic behavior in turning distances. In Sec. \ref{subsec:t1}, we find the nontrivial ordering of turning distances against aspect ratios manifesting in turning disorders of spring networks.

\begin{figure}
\centering
	\includegraphics[width = \linewidth]{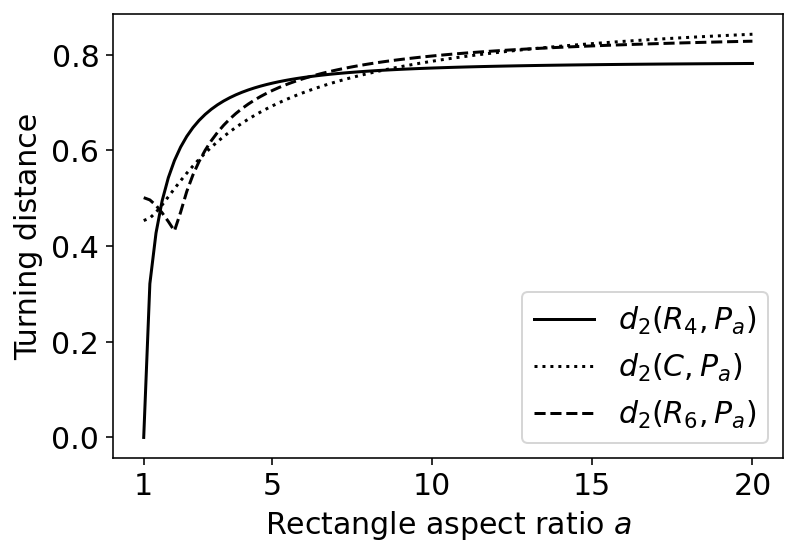}
	\caption{Turning distances of ordered shapes compared against rectangles $P_a$ with aspect ratios $1 \le a \le 20$} \label{ar_plot}
\end{figure}

It is straightforward to see that the turning function can be arbitrarily large for nonconvex shapes by constructing spiral-like curves with many rotations.  That the turning distance between two functions is unbounded is somewhat less transparent, as we have to consider minimum $L^p$ distances across all vertical and horizontal translations of turning functions.

\begin{proposition} \label{boundprop}
$d_p(\cdot, \cdot)$ is unbounded.	
\end{proposition}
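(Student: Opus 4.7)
The plan is to exhibit a family of simple closed curves $\{P_k\}_{k\ge 1}$ whose turning functions have images spread over arbitrarily large intervals, and to show that this spread cannot be undone by the rotation $\theta$ and phase shift $t$ in the definition of $d_p$.

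\emph{Construction.} For each $k$, take $P_k$ to be the boundary of a long thin rectangular strip bent into a uniformly spaced Archimedean spiral with $k$ full turns, then rescaled so that the total perimeter equals $1$. By choosing the strip width small relative to the radial spacing between coils, $P_k$ is a simple closed curve. The outer side of the strip contributes an arc-length interval $I_k \subset [0,1]$ with $|I_k| \geq 1/3$ along which the tangent direction advances monotonically through approximately $2\pi k$ radians at a near-constant rate; hence $\Theta_{P_k}|_{I_k}$ is monotonic with image an interval of length at least $2\pi k - C_0$, and $\lvert d\Theta_{P_k}/ds \rvert$ is uniformly comparable to $k$ on $I_k$.

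\emph{Lower bound.} Fix any comparison shape $B$ with $\|\Theta_B\|_\infty \le M_B < \infty$. For arbitrary $\theta \in \mathbb R$ and $t \in [0,1]$, set $\psi(s) := \Theta_{P_k}(s+t) - \Theta_B(s) + \theta$. Changing variables via $u = \Theta_{P_k}(s+t)$ on $I_k - t$ and using the uniform bound on $d\Theta_{P_k}/ds$,
\[ \int_0^1 |\psi(s)|^p\, ds \;\ge\; \frac{c_1}{k} \int_{a}^{a + 2\pi k - C_0} \bigl| u - h(u) + \theta \bigr|^p \, du, \]
where $h$ is bounded by $M_B$ and $c_1$ is an absolute constant independent of $B,\theta,t,k$. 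Minimizing the right-hand side over $\theta$ (which amounts to recentering the integration interval about $0$) gives a lower bound of order $k^p$, so $d_p(P_k, B) \gtrsim k$ and hence tends to $\infty$ as $k \to \infty$.

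\textbf{Main obstacle.} The chief work is the geometric setup: ensuring that $P_k$ is simple with perimeter $1$ and that the outer-strip arc length $|I_k|$ stays bounded below while $\Theta_{P_k}$ advances uniformly through $\approx 2\pi k$ radians along it, all simultaneously as $k\to\infty$. Once these properties are arranged, the change-of-variables and minimization in the lower bound are routine. For the special case $p = 2$ one can bypass this bookkeeping entirely using the identity $\min_\theta \int_0^1 (f+\theta)^2\, ds = \mathrm{Var}(f)$ and the Cauchy--Schwarz consequence $\mathrm{Var}(X-Y) \ge (\sqrt{\mathrm{Var}(X)} - \sqrt{\mathrm{Var}(Y)})^2$, so that $d_2(P_k, B)^2 \ge (\sqrt{\mathrm{Var}(\Theta_{P_k})} - \sqrt{\mathrm{Var}(\Theta_B)})^2 \to \infty$ as soon as $\mathrm{Var}(\Theta_{P_k}) \to \infty$, which the spiral construction clearly supplies.
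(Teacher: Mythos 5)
Your proposal is correct and takes essentially the same route as the paper: both arguments build a simple closed curve that spirals through $k$ (resp.\ $n$) full turns so that its turning function sweeps an interval of length $\approx 2\pi k$, and both exploit the fact that the comparison shape's turning function has range confined to a bounded interval (up to the $2\pi$-periodic extension), so that no choice of $\theta$ and $t$ can cancel the spread; the paper fixes the circle as the comparison shape and winds the spiral tightly around a circle of radius $1/(4\pi n)$, which makes the turning rate genuinely near-constant and lets it conclude directly that the difference exceeds $n\pi/2$ on a set of measure $1/2$. One caveat on your construction: for a \emph{uniformly spaced} Archimedean spiral normalized to unit perimeter, $d\Theta_{P_k}/ds$ is not uniformly comparable to $k$ across all coils (it grows like $k^2$ near the center), so you should either restrict $I_k$ to the outer coils, which carry a fixed positive fraction of the arc length and where the rate is indeed $\Theta(k)$, or wind the coils around a fixed circle as the paper does; with that repair, and noting in your $p=2$ shortcut that $\mathrm{Var}(\Theta_{P_k}(\cdot+t))\to\infty$ uniformly in $t$ (not only at $t=0$), the argument is sound.
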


\begin{proof}
For each $n >1 $, we can construct a sequence of spiral shapes with $n$ rotations with turning functions that approach, in $L^p(0,1)$, the piecewise-linear function
\begin{equation}
	g_n(s) = \begin{cases}
4\pi n s & s \in [0,1/2),\\
 4(1-n) \pi s+ (4n-3)\pi  & s \in [1/2, 1),\\
2\pi  & s = 1.
\end{cases}
\end{equation}
Specifically, consider a counterclockwise spiral tightly wound around a circle of radius $1/(4\pi n)$ beginning at the bottom of the circle with a tangent angle of 0. After the $n$th rotation, the curve sharply turns $\pi$ radians clockwise and spirals back to the origin point, where it turns counterclockwise by $\pi$ radians right before closing the curve. See Fig. \ref{spiral_plot}

\begin{figure}
\centering
	\includegraphics[width = \linewidth]{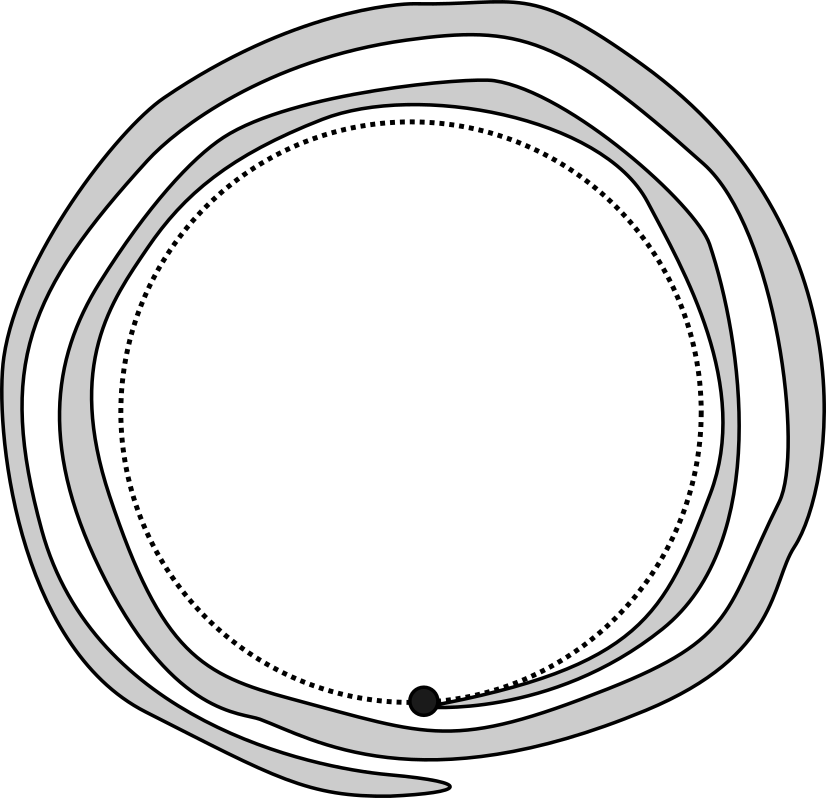}
	\caption{A spiral shape with two rotations wound around a circle (shown dashed) of radius $1/(4\pi n)$.  The turning function begins at the origin (black dot). The interior of the spiral shape is shaded in gray.} \label{spiral_plot}
\end{figure}

We now compare $g_n(s)$ against $h(s) = 2 \pi s$, the turning function for the unit perimeter circle.  Clearly, $h(s+t) = h(s)+2\pi t$, and so horizontal and vertical translations of $h$ are equivalent, meaning that we can set $t = 0$ in  (\ref{maindist}) and optimize $d_p(g_n,h)$ over $\theta$.  Furthermore, for all $\theta$ , the range $\{h(s) + \theta\}_{ s \in[ 0,1]}$ will be contained in an interval of length $2 \pi$.  With this fact, it follows that for all  $\theta \in \mathbb R$,  there is a set $A_\theta \subseteq [0,1]$ with $|A_\theta| \ge 1/2$ such that for all $s \in A_\theta$,
\begin{equation}
	|h(s) + \theta - g(s) | \ge n\pi/2.
\end{equation}
Then the  $p$-distance can be bounded from below by 
\begin{align}
 d_p(g,h)  = \left( \min_{\theta \in \mathbb R, t \in [0,1]} \int_0^1 |h(s+t)- g_n(s)+ \theta |^p ds\right) ^ \frac{1}{p} \\
 \ge \left(\min_{\theta \in \mathbb R }\int_{A_\theta} |h(s) + \theta  - g(s) | ^p ds \right)^{1/p} \ge 2^{-(p+1)/p}n\pi,
 \end{align}
 which diverges as $n\rightarrow \infty$.
\end{proof}

By comparing $g_n(s)$ against the turning function of a regular polygon, the proof of Prop. \ref{boundprop} follows similarly.  From this it is straightforward to show
\begin{cor}
The turning disorders in (\ref{regdist})-(\ref{circdist}) are all unbounded.
\end{cor}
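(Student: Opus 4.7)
The plan is to reduce the unboundedness of each of the six disorders in (\ref{regdist})-(\ref{circdist}) to Proposition \ref{boundprop} together with the extension remark immediately preceding the corollary. Each disorder is an average (either uniform or area-weighted) of terms of the form $d_2(P_i, A_i)$ with $A_i \in \{R_{s(P_i)}, R_6, C\}$, so the cleanest route is to realize the Proposition \ref{boundprop} spirals as the \emph{only} face of a network, so that no averaging can dampen the blow-up of the single term.

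Concretely, I would take the polygonal spiral $S_k$ with $k$ rotations appearing in the proof of Proposition \ref{boundprop}, rescale it to have unit area, and let $G_k$ be the single-face network consisting of $S_k$ alone (an admissible partition $\overline \Omega = \overline{S_k}$ with $n=1$). For this $G_k$ the unweighted and weighted disorders both collapse to the single number $d_2(S_k, A)$, since $n = 1$ and $|S_k| = 1$. Proposition \ref{boundprop} gives $d_2(S_k, C) \to \infty$, and by the extension remark (with $h$ replaced by the turning function of $R_6$ or $R_{s(S_k)}$) one likewise has $d_2(S_k, R_6) \to \infty$ and $d_2(S_k, R_{s(S_k)}) \to \infty$. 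Hence all six disorders diverge along $\{G_k\}_{k \ge 1}$.

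The only step that is not mere bookkeeping is the announced extension of Proposition \ref{boundprop} from the circle to arbitrary regular polygons. What the proof actually uses about the circle's turning function $h(s) = 2\pi s$ is that for every $t \in [0,1]$ and every $\theta \in \mathbb R$ the range of $h(\cdot + t) + \theta$ on $[0,1]$ sits inside an interval of length $2\pi$, which forces $|g_k(s) + \theta - h(s+t)| \ge k\pi/2$ on a set of measure at least $1/2$ by a Chebyshev-style argument. Since the turning function of any regular polygon is a step function whose total increment over one traversal is $2\pi$, its range also lies in an interval of length $2\pi$ for every choice of $t$ and $\theta$, so the identical bound goes through. This is the one nontrivial ingredient, and it is precisely the point flagged in the sentence preceding the corollary.
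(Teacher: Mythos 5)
Your proposal is correct and follows essentially the same route as the paper: the paper's own justification is the one-line remark that Proposition \ref{boundprop} extends to regular polygons by the same comparison with $g_n$, from which the corollary is "straightforward," and your single-face network $G_k$ plus the observation that any regular polygon's (shifted) turning function has range inside an interval of length $2\pi$ is exactly the intended filling-in of that remark. No gaps; you have simply made explicit the bookkeeping the paper leaves implicit.
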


Restricted to convex shapes, curvature cannot change signs, and so turning functions $f$ must be nondecreasing.  Paired with (\ref{periodic}), it follows that for all $s \in \mathbb R$ and $t \in [0,1]$
\begin{equation}
|f(t+s) - f(s)| \le 2\pi.
\end{equation}
This implies an upper bound of $d_p(f,g) \le 2 \pi$ between two turning functions for convex shapes. This bound is not sharp.  We conjecture that $\frac{\pi}2(p+1)^{-1/p}$ is the sharp upper bound between two closed convex curves for the $p$-distance.  This value is obtained as an increasing limit of turning distance between  the unit circle and  a sequence of rectangles $P_n$ with aspect ratios of $n$.  Stated formally, the sharp bound we are looking for would be $\sup_{f,g \in \mathcal H} d_p(f,g) $, where optimization is over the Helly space  $\mathcal H$ of nondecreasing functions $f: [0,1] \rightarrow [0, 2 \pi]$ under the endpoint constraints of $f(0) = 0, f(1) = 2 \pi$, and also the integral constraints
\begin{align} 
\int_0^1 \sin(f(s))ds = \int_0^1 \cos(f(s))ds = 0.
 \end{align}
We require $f \in \mathcal H$ to be nondecreasing for the shape to be convex, while the integral constraints ensure that the curve is closed.

\section{Computing the turning distance} \label{sec:compturn}

\subsection{General polygons}

In this subsection, we review the main results from Arkin et al.\@ \cite{arkin1989efficiently} for computing the turning distance.  Computing $d_p$ is most efficient for $p = 2$, since for each translation $t \in [0,1]$, we have an integral expression for the optimal rotation $\theta^*(t)$, thereby reducing the minimization problem (\ref{maindist}) to a single variable.  From Lemma 3 in \cite{arkin1989efficiently}, we find
\begin{proposition}
\label{l2lem}
Under the $L_2$ norm ($p = 2$), for polygons $A$ and $B$,  
\begin{equation}
d_2(A,B) = \left( \min_{t \in [0,1]} \int_0^1 (\Theta_A(s+t)- \Theta_B(s))^2ds -  \theta^*(t)^2\right)^{1/2},
\end{equation}
with the optimal rotation $\theta^*(t)$ given by 
\begin{equation}
\theta^*(t) = \int_0^1 g(s+t) - f(s)ds.\label{thetaopt}
\end{equation}
\end{proposition}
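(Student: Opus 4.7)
The plan is to fix $t \in [0,1]$ and reduce the two-variable minimization in (\ref{maindist}) to a one-variable minimization over $\theta$ by finding the optimal rotation explicitly via calculus, then substitute back into the objective.

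First, set $F_t(\theta) = \int_0^1 (\Theta_A(s+t) - \Theta_B(s) + \theta)^2\, ds$. Expanding the square and separating terms by powers of $\theta$ shows that $F_t$ is a quadratic in $\theta$ with leading coefficient $\int_0^1 ds = 1 > 0$, so $F_t$ is strictly convex and attains a unique global minimum at its critical point. Differentiating under the integral sign with respect to $\theta$ and setting $F_t'(\theta) = 0$ gives
\begin{equation*}
2\int_0^1 \bigl(\Theta_A(s+t) - \Theta_B(s) + \theta\bigr) ds = 0,
\end{equation*}
which is solvable in closed form and yields $\theta^*(t) = \int_0^1 \bigl(\Theta_B(s) - \Theta_A(s+t)\bigr)\,ds$, matching (\ref{thetaopt}) up to the relabeling $(f,g) = (\Theta_B,\Theta_A)$ used in the statement.

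Next, I would substitute $\theta = \theta^*(t)$ back into $F_t$. Expanding the square and using the defining identity $\int_0^1 (\Theta_A(s+t) - \Theta_B(s))\,ds = -\theta^*(t)$, the cross term $2\theta^*(t)\int_0^1(\Theta_A(s+t)-\Theta_B(s))\,ds = -2\theta^*(t)^2$ combines with the constant $\theta^*(t)^2 \int_0^1 ds = \theta^*(t)^2$ to give a net contribution of $-\theta^*(t)^2$. Therefore
\begin{equation*}
\min_{\theta \in \mathbb{R}} F_t(\theta) = F_t(\theta^*(t)) = \int_0^1 \bigl(\Theta_A(s+t) - \Theta_B(s)\bigr)^2 ds - \theta^*(t)^2.
\end{equation*}
Finally, taking the minimum over $t \in [0,1]$ and applying the square root recovers $d_2(A,B)$ as stated.

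There is no real obstacle here: the argument is a standard completion of the square / first-order-condition computation, and convexity of $F_t$ in $\theta$ guarantees that the critical point is a global minimum so that the outer $\min_t$ commutes with the inner inf over $\theta$. The only mild bookkeeping point worth checking carefully is the sign in the final simplification, where it is tempting to lose a factor of two when combining the linear and constant pieces of the expanded quadratic; writing $F_t(\theta^*) = \int (\Theta_A(s+t)-\Theta_B(s))^2 ds + 2\theta^*(-\theta^*) + (\theta^*)^2$ makes the cancellation transparent.
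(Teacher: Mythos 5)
Your completion-of-the-square argument is correct, and it is exactly the standard computation behind this result; the paper itself does not reprove it but simply imports it as Lemma~3 of Arkin et al., so there is nothing to contrast. The only nit is that your $\theta^*(t)=\int_0^1(\Theta_B(s)-\Theta_A(s+t))\,ds$ differs by a sign from the paper's displayed $\int_0^1 g(s+t)-f(s)\,ds$ (the paper's $f,g$ labeling is itself loose here), but since only $\theta^*(t)^2$ enters the distance formula this is immaterial.
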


The optimization space for $t \in [0,1]$ can be further reduced to a discrete set when considering polygons, whose turning functions are step functions with jumps corresponding to corners.  For two polygonal turning functions $f$ and $g$, we define the set of \textit{critical events} $\mathrm{Cr}(f,g)$ as those values $t \in [0,1]$ for which $f(t+s)$ and $g(s)$ have at least one jump occurring at the same point.  Here we give a minor generalization of Lemma 4 and following discussion in \cite{arkin1989efficiently}.

\begin{proposition}
 \label{breaklem}
Let $p>0$. For polygons $A$ and $B$ with critical events $\mathrm{Cr}(A,B)$, 
\begin{equation}
d_p(A, B) = \left( \min_{\Theta \in \mathbb R, t \in \mathrm{Cr}(A,B)} \int_0^1 |\Theta_A(s+t)- \Theta_B(s)+ \theta |^p ds\right) ^ \frac{1}{p}.  \label{dpbreak}
\end{equation} 
\end{proposition}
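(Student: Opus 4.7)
The plan is to reduce the continuous optimization over $t \in [0,1]$ to the finite set $\mathrm{Cr}(A,B)$ by exploiting the piecewise-constant structure of polygonal turning functions. Define
\begin{equation}
F(t,\theta) = \int_0^1 |\Theta_A(s+t)-\Theta_B(s)+\theta|^p \, ds,
\end{equation}
and let $\phi(t) = \inf_{\theta \in \mathbb R} F(t,\theta)$, so that $d_p(A,B)^p = \inf_{t\in[0,1]} \phi(t)$. The idea is to show that $\phi$ is concave on each subinterval between consecutive critical events, so its minimum over $[0,1]$ is attained at a critical event.

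First I would fix an open interval $(t_1, t_2)$ between consecutive values of $\mathrm{Cr}(A,B)$. For $t$ in this interval, the jumps of $s \mapsto \Theta_A(s+t)$ occur at positions of the form $a_i - t \pmod 1$, while the jumps of $\Theta_B$ occur at fixed positions $b_j$. By the very definition of a critical event, no coincidences between these two sets occur for $t\in(t_1,t_2)$, so the cyclic ordering of the combined jump positions on $[0,1]$ is constant throughout the interval. This yields a decomposition $[0,1]=\bigsqcup_k I_k(t)$ such that on each piece $I_k(t)$, the difference $\Theta_A(s+t)-\Theta_B(s)$ equals some constant $v_k$ that is independent of $t$ (since $v_k$ is determined solely by which step of $\Theta_A$ and which step of $\Theta_B$ are active). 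Each endpoint of $I_k(t)$ is either of the form $a_i - t$ or of the form $b_j$, so the length $\ell_k(t)$ is an affine function of $t$.

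Next I would write
\begin{equation}
F(t,\theta) = \sum_k \ell_k(t) \, |v_k + \theta|^p.
\end{equation}
For each fixed $\theta$, this is an affine function of $t$ on $[t_1,t_2]$. Since $F(t,\theta)\to\infty$ as $|\theta|\to\infty$ uniformly in $t$ on the interval, $\phi(t)$ is attained as a minimum over a compact set of $\theta$-values. Taking the infimum over $\theta$ therefore makes $\phi$ the pointwise infimum of a family of affine functions of $t$, hence concave on $[t_1,t_2]$. A concave function on a closed interval attains its minimum at an endpoint, so $\min_{t\in[t_1,t_2]}\phi(t) = \min(\phi(t_1),\phi(t_2))$; both endpoints lie in $\mathrm{Cr}(A,B)$. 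Covering $[0,1]$ by the finitely many such closed subintervals gives $\inf_{t\in[0,1]}\phi(t) = \min_{t\in\mathrm{Cr}(A,B)}\phi(t)$, which is (\ref{dpbreak}).

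The main obstacle I anticipate is the bookkeeping around jumps that wrap past $s=0$ or $s=1$ as $t$ varies, and the corresponding behavior of $v_k$ under the shift. The periodicity relation (\ref{periodic}) resolves this: when a jump of $\Theta_A(\cdot+t)$ crosses the boundary of $[0,1]$, the value $v_k$ picks up a compensating $\pm 2\pi$, but such a crossing itself occurs only at $t\in\mathrm{Cr}(A,B)$ (namely when $a_i - t \equiv 0$), so within an open interval between critical events the constants $v_k$ remain unambiguous and the affine-in-$t$ structure of each summand $\ell_k(t)|v_k+\theta|^p$ is preserved.
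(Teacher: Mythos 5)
Your proof is correct and follows essentially the same route as the paper's sketch (itself adapted from Arkin et al.): between consecutive critical events the integrand decomposes into steps of constant height $v_k$ over intervals of length $\ell_k(t)$ affine in $t$, so for fixed $\theta$ the integral is affine in $t$ and its minimum occurs at an endpoint, i.e.\ at a critical event. Your added observation that $\phi(t)=\inf_{\theta}F(t,\theta)$ is concave as a pointwise infimum of affine functions is a clean way to make rigorous the interaction between the $t$- and $\theta$-minimizations, which the paper's sketch leaves implicit.
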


\textit{Proof sketch:}
In \cite{arkin1989efficiently}, Prop. \ref{breaklem} is shown for $p = 2$, but the proof can be readily generalized for any $p>1$. The main point is that for any value of $\theta \in \mathbb R$ and $t \in [0,1]$,  the integral in (\ref{maindist}) is given by sums of rectangles whose widths are distances between discontinuity points in $\Theta_A(s+t)$ and $\Theta_B(s)$. As $t$ varies between points in $\mathrm{Cr}(A,B)$, rectangles grow or shrink at a linear rate, preventing a local minimum from occurring.  Thus, minima can only occur at values $t \in \mathrm{Cr}(A,B)$, where rectangles can be created or eliminated.

\qed

Arkin et al. \cite{arkin1989efficiently} then combine Props. \ref{l2lem} and \ref{breaklem} for computing the 2-turning distance.

\begin{cor} \label{breakcor}
For polygons $A$ and $B$, 
\begin{equation} d_2(A,B) = \left( \min_{ t \in \mathrm{Cr}(A,B)} \int_0^1 (\Theta_A(s+t)- \Theta_B(s))^2ds -  \theta^*(t)^2\right)^{1/2} . \label{d2break}
\end{equation} 
\end{cor}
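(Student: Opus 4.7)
The corollary is essentially a direct combination of Propositions \ref{l2lem} and \ref{breaklem}, so my plan is to spell out precisely how the two results compose, being careful about the order in which the two variables $\theta$ and $t$ are optimized.

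Define the auxiliary function
\begin{equation*}
F(t) = \min_{\theta \in \mathbb{R}} \int_0^1 \bigl(\Theta_A(s+t) - \Theta_B(s) + \theta\bigr)^2 \, ds.
\end{equation*}
By Proposition \ref{l2lem}, the inner minimum over $\theta$ is attained at $\theta^*(t)$ given by \eqref{thetaopt}, and substituting this back yields the identity
\begin{equation*}
F(t) = \int_0^1 \bigl(\Theta_A(s+t) - \Theta_B(s)\bigr)^2 \, ds - \theta^*(t)^2.
\end{equation*}
By the definition \eqref{maindist} of the 2-turning distance we have $d_2(A,B)^2 = \min_{t \in [0,1]} F(t)$.

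Next I would invoke Proposition \ref{breaklem} with $p = 2$. That result asserts that jointly minimizing over $(\theta, t)$ it suffices to restrict $t$ to the finite set $\mathrm{Cr}(A,B)$. Since the joint minimum equals $\min_{t\in[0,1]} F(t)$ after carrying out the $\theta$-optimization inside, and equally equals $\min_{t \in \mathrm{Cr}(A,B)} F(t)$ after doing the same $\theta$-optimization on the restricted set, we obtain
\begin{equation*}
d_2(A,B)^2 = \min_{t \in \mathrm{Cr}(A,B)} F(t) = \min_{t \in \mathrm{Cr}(A,B)} \left(\int_0^1 \bigl(\Theta_A(s+t) - \Theta_B(s)\bigr)^2 \, ds - \theta^*(t)^2\right),
\end{equation*}
and taking square roots gives \eqref{d2break}.

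The only subtle point, and the one I would flag as the main obstacle, is justifying that one is entitled to swap the order ``minimize $\theta$ first, then $t$'' versus the joint minimization used in Proposition \ref{breaklem}. This is simply Fubini-style reordering for infima, $\min_{t}\min_{\theta} = \min_{\theta,t}$, which holds unconditionally, so no extra hypotheses are needed. Everything else is bookkeeping: Proposition \ref{l2lem} identifies the inner minimizer in closed form, and Proposition \ref{breaklem} discretizes the outer variable, so stitching the two together produces \eqref{d2break} without further computation.
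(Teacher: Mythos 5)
Your argument is correct and matches the paper, which states the corollary as a direct combination of Propositions \ref{l2lem} and \ref{breaklem} without further proof; your explicit handling of the order of minimization (restricting $t$ to $\mathrm{Cr}(A,B)$ via Proposition \ref{breaklem} and then carrying out the inner $\theta$-optimization in closed form via Proposition \ref{l2lem}) is exactly the intended reasoning.
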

If $A$ and $B$ have $m$ and $n$ vertices, respectively, it takes time $O(mn (m+n))$ to compute $d_2(A,B)$.  By considering a sorted list of critical events,  this can be refined to $O(mn \log(mn))$ (see Theorem 6 in \cite{arkin1989efficiently}).

\subsection{Regular polygons}

For turning distances between regular polygons, we can remove the optimization of critical events in (\ref{dpbreak}), and subsequently for $p = 2$ express distance in (\ref{d2break}) as a summation requiring no optimization.  Such a formula will serve as the foundation for deriving exact formulas for turning distances in Section \ref{sec:arch}.   In what follows, all polygons $P$ are scaled, translated, and rotated to have unit length, with a corner located at the origin, and an edge containing the origin lying on the positive x-axis. Furthermore, we will set the starting point of the turning function $\Theta_P(s)$ to be the origin, with the horizontal side to its right as the first edge traversed as $s$ increases, meaning $\theta_{P}(s) =  0$ for $s\in [0, s_1)$, where $s_1$ is the length of the side on the $x$-axis.  Under these conventions, the turning function $f_n(s)$ for a regular polygon $R_n$ with $n$ sides is the step function
\begin{equation}
	f_n(s) = \sum_{i  \in \mathbb Z} (2\pi i/n) \mathbf 1(s \in [i/n, (i+1)/n)). \label{turnregdef}
\end{equation}
Here, we use the indicator $\mathbf 1(A)$ to equal 1 if $A$ is true and 0 if $A$ is false.

For regular polygons, we use the symmetry of the shape to show the invariance of critical events in the turning distance.  In terms of turning functions, this manifests in the equivalence of horizontal shifts and vertical shifts.  Namely, for all $s \in \mathbb R$ and $ i \in \mathbb Z$, we have the identity
      \begin{equation}
 f_n(s+ i/n) = f_n(s) + 2\pi i/n.	 \label{shiftisrot}
 \end{equation}
 This identity could be used to remove optimization over critical events for the turning distance between regular polygons.

\begin{proposition} \label{genppoly}
Let $f_n$ be defined as in (\ref{turnregdef}).  Then for $n,k \ge 2$, 
\begin{equation}
 d_p(R_n,R_k) = \left( \min_{\theta \in \mathbb R} \int_0^1 |f_n(s)- f_k(s)+ \theta |^p ds\right) ^ \frac{1}{p} \label{pnot}
\end{equation}
and 
\begin{equation}
 d_2(R_n,R_k) = \left( \int_0^1 (f_n(s)- f_k(s))^2 ds-\theta^*(0)^2\right) ^ \frac{1}{2}.\label{pnot2}
\end{equation}

 \end{proposition}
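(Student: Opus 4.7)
The plan is to use Proposition \ref{breaklem} to reduce the minimization in $t$ to a discrete set of critical events, then use the symmetry identity (\ref{shiftisrot}) together with a change of variables to show that every critical event yields the same value of $\min_\theta$, so we may as well fix $t=0$.

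First I would identify the critical events: since $f_n$ has jumps exactly at $i/n$ and $f_k$ at $j/k$ for $i, j \in \mathbb{Z}$, a horizontal shift $t$ aligns a jump of $f_n(\cdot + t)$ with a jump of $f_k$ precisely when $t \equiv i/n - j/k \pmod 1$. So
\begin{equation}
\mathrm{Cr}(R_n, R_k) = \{i/n - j/k \bmod 1 : i, j \in \mathbb{Z}\}.
\end{equation}
Writing $I(t,\theta) = \int_0^1 |f_n(s+t) - f_k(s) + \theta|^p\, ds$, Proposition \ref{breaklem} gives $d_p(R_n, R_k)^p = \min\{I(t,\theta) : t \in \mathrm{Cr}(R_n, R_k), \theta \in \mathbb{R}\}$.

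The core step is to verify two equivalences that generate the whole critical set. For the first, applying identity (\ref{shiftisrot}) directly inside the integrand gives
\begin{equation}
I(t + 1/n, \theta) = \int_0^1 \bigl|f_n(s+t) + 2\pi/n - f_k(s) + \theta\bigr|^p ds = I(t, \theta + 2\pi/n).
\end{equation}
For the second, substitute $u = s + 1/k$ to obtain
\begin{equation}
I(t + 1/k, \theta) = \int_{1/k}^{1+1/k} \bigl|f_n(u+t) - f_k(u - 1/k) + \theta\bigr|^p du,
\end{equation}
then apply (\ref{shiftisrot}) to $f_k$ to replace $f_k(u-1/k)$ by $f_k(u) - 2\pi/k$; the integrand $|f_n(u+t) - f_k(u) + \theta + 2\pi/k|^p$ is $1$-periodic in $u$ because both (\ref{periodic}) applied to $f_n$ and $f_k$ add the same $2\pi$, so the difference is genuinely periodic. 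Thus $I(t + 1/k, \theta) = I(t, \theta + 2\pi/k)$.

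Iterating these two equivalences, any critical point $t_0 = i/n - j/k$ satisfies $I(t_0, \theta) = I(0, \theta + 2\pi(j/k - i/n))$, so $\min_\theta I(t_0, \theta) = \min_\theta I(0, \theta)$ for every $t_0 \in \mathrm{Cr}(R_n, R_k)$. This gives (\ref{pnot}). Finally, (\ref{pnot2}) follows by specializing Proposition \ref{l2lem} to $t = 0$, noting that the optimal rotation at $t=0$ is $\theta^*(0)$ as defined in (\ref{thetaopt}).

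The main obstacle I anticipate is the bookkeeping in the second equivalence, namely justifying that after the substitution $u = s + 1/k$ the resulting integrand really is $1$-periodic so that the range of integration can be shifted back to $[0,1]$. Everything else is a direct application of (\ref{shiftisrot}) and Propositions \ref{breaklem} and \ref{l2lem}.
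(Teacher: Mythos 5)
Your proposal is correct and follows essentially the same route as the paper: both reduce to critical events via Proposition \ref{breaklem}, use the identity (\ref{shiftisrot}) together with a change of variables and the $1$-periodicity of $f_n-f_k$ to show every critical event gives the same $\min_\theta$, and then invoke Proposition \ref{l2lem} at $t=0$ for (\ref{pnot2}). The only cosmetic difference is that you establish the two generating equivalences $I(t+1/n,\theta)=I(t,\theta+2\pi/n)$ and $I(t+1/k,\theta)=I(t,\theta+2\pi/k)$ and iterate, whereas the paper carries out the full chain of equalities for a general critical event in one pass; the sign discrepancy in your final shift of $\theta$ is harmless since the minimum is taken over all $\theta\in\mathbb R$.
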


 \begin{proof}
  From Prop. \ref{breaklem}, we find that maxes and mins for the $p$-turning distance occur at critical events.  For regular polygons $R_n$ and $R_k$, the critical events are
  \begin{equation}
  \mathrm{Cr}(R_n,R_k) =  \{l/n+ m/k \in [0,1]: l = 0, \dots, n-1, \quad m = 0, \dots, k-1\}. \label{ngonevents}
  \end{equation}

Optimization over $\theta$ over these values of $t \in   \mathrm{Cr}(R_n,R_k)$ is equivalent to an optimization at $t = 0$, as for any $m/k +l/n \in \mathrm{Cr}(R_n,R_k)$, 
  \begin{align*}
& \min_{\theta \in \mathbb R}\int_0^1 	|f_n(m/k +l/n+ s)- f_k(s)+\theta|^pds \\&= \min_{\theta \in \mathbb R}\int_0^1 	|f_n(m/k + s)- f_k(s) + 2 \pi l/n  +\theta|^p ds \\
&=  \min_{\theta \in \mathbb R}\int_{m/k}^{1+m/k} |f_n(s)- f_k(s-m/k) + 2 \pi l/n +\theta|^p ds \\
&= \min_{\theta \in \mathbb R}\int_{m/k}^{1+m/k} |f_n(s)- f_k(s) + 2 \pi (l/n- m/k) +\theta|^p ds \\
 &= \min_{\theta \in \mathbb R}\int_{0}^{1} |f_n(s)- f_k(s) + 2 \pi (l/n- m/k) +\theta|^p ds \\
 &= \min_{\theta \in \mathbb R}\int_{0}^{1} |f_n(s)- f_k(s)+\theta|^p ds.
 \end{align*}
The second to last equality uses (\ref{periodic}), which implies $f_n-f_k$ is $1$-periodic.  The last equality holds since the shifted variable  $\theta' = \theta+ 2 \pi (l/n- m/k)$  is still optimized over $\mathbb R$. Then (\ref{pnot}) follows immediately from Prop. \ref{breaklem}, and (\ref{pnot2}) follows from Prop. \ref{l2lem}  and Corollary \ref{breakcor}.  
	\end{proof}

    \begin{remark}In this result, we have allowed for the ``2-gon" $R_2$. While technically not a polygon, $R_2$ is interpreted as a line segment connecting $(0,0)$ and $(1/2,0)$, with a turning function of 
\begin{equation}
f_2(s) =  \begin{cases}
        0 & s \in [0,1/2), \\
        \pi & s \in [1/2, 1).
   \end{cases} \label{r2turn}
\end{equation}
Geometrically, this can be seen as a limiting shape of rectangles with aspect ratio $a \rightarrow \infty$, and will be used for approximating needlelike regions in Sec. \ref{subsec:t1}.
\end{remark}

The 2-turning distance between regular polygons, written as an integral in (\ref{pnot2}), can instead be written as a sum using floor functions through a direct calculation.  For simplicity in presentation, in what follows we will write $D_2(P_1, P_2) = (d_2(P_1, P_2)/\pi)^2$.

\begin{proposition}
Let $n,k \ge 2$.  
\begin{enumerate}
\item The 2-turning distance between $R_n$ and $R_k$ satisfies \begin{equation}
	D_2(R_n, R_k) = - \left(\frac 1n - \frac 1k \right)^2+ \frac{4}{nk}\sum_{i = 0}^{nk}	\left (\frac{1}{k}\left\lfloor \frac in\right \rfloor - \frac 1n\left\lfloor \frac ik\right \rfloor \right )^2. \label{mainregpoly}
\end{equation}
\item  $D_2(R_n, R_k)$ can be computed in $O((n+k)\log(n+k))$ time.
\end{enumerate}

 \end{proposition}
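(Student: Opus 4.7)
For part (1), I would invoke Proposition~\ref{genppoly}, equation~(\ref{pnot2}), which reduces the computation to evaluating $\int_0^1 (f_n - f_k)^2\,ds$ and $\theta^*(0)^2$. Since the jumps of $f_n$ lie at multiples of $1/n$ and those of $f_k$ at multiples of $1/k$, every discontinuity of $f_n - f_k$ sits on the grid $\{i/(nk)\}_{i=0}^{nk}$. The natural move is then to partition $[0,1]$ into the half-open intervals $I_i = [i/(nk), (i+1)/(nk))$, on each of which both step functions are constant.

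On $I_i$, the identity $f_n(s) = (2\pi/n)\lfloor ns \rfloor$, together with the fact that $ns$ ranges over $[i/k, (i+1)/k)$---an interval of width $1/k \le 1$ whose interior contains no integer---gives $f_n|_{I_i} = (2\pi/n)\lfloor i/k \rfloor$ and, symmetrically, $f_k|_{I_i} = (2\pi/k)\lfloor i/n \rfloor$. Squaring and summing over the $nk$ subintervals then produces the summation in (\ref{mainregpoly}), observing that the $i = nk$ summand equals $(1-1)^2 = 0$ so the upper limit can be raised from $nk-1$ to $nk$ without affecting the value. A separate direct computation of $\int_0^1 f_n(s)\,ds = \pi(n-1)/n$ via a telescoping sum yields $\theta^*(0)^2 = \pi^2(1/n - 1/k)^2$. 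Dividing by $\pi^2$ and assembling the pieces delivers the claimed identity.

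For part (2), the key observation is that the summand in (\ref{mainregpoly}), viewed as a function of $i$, is piecewise constant and can change only when $i$ crosses a multiple of $n$ or of $k$. The set of such breakpoints in $[0, nk]$ has cardinality at most $n + k + 2$; sorting this union of two arithmetic progressions takes $O((n+k)\log(n+k))$ time, after which a single sweep accumulates (constant value)$\times$(block length) over each maximal constant block in linear additional time, for a total of $O((n+k)\log(n+k))$. I do not anticipate any substantial obstacle: the step requiring the most care is the verification in the preceding paragraph that $\lfloor ns \rfloor$ and $\lfloor ks \rfloor$ are simultaneously constant on each $I_i$, which amounts to routine bookkeeping about intervals of length $1/k$ and $1/n$.
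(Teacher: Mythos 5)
Your proof is correct and follows essentially the same route as the paper: invoke (\ref{pnot2}), partition $[0,1]$ into the $nk$ subintervals on which both step functions are constant, and compute $\theta^*(0)$ from $\int_0^1 f_n(s)\,ds = \pi(1-1/n)$; for part (2), your sweep over the $O(n+k)$ breakpoints is exactly the paper's sorted-jump-list algorithm. (Your identification $f_n|_{I_i}=(2\pi/n)\lfloor i/k\rfloor$ and $f_k|_{I_i}=(2\pi/k)\lfloor i/n\rfloor$ is the correct one --- the paper's proof writes these with $n$ and $k$ transposed, which is immaterial after squaring the difference.)
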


\begin{proof}

To show (1), we note that from Prop. \ref{genppoly}, the first term in (\ref{mainregpoly}) is the optimal rotation at $t = 0$, 
\begin{equation}
\theta^*(0)	= \pi\left(\frac 1n - \frac 1k \right).\label{rot0reg}
\end{equation}
This immediately follows from (\ref{thetaopt}) and noting that 
\begin{equation}
\int_0^1 f_n(s)ds = \pi\left(1-\frac{1}{n}\right). 
\end{equation}
For the summation term, we partition $[0,1]$ into $nk$ equally sized intervals. In the interval $s \in [i/nk, (i+1)/nk))$, $f_n(s) \equiv \frac{2 \pi}{k}\left\lfloor \frac in\right \rfloor $ and $f_k(s) \equiv \frac{2 \pi}{n}\left\lfloor \frac ik\right \rfloor $.  The result then follows from (\ref{pnot2}).

To show (2), we can compute (\ref{mainregpoly}) as follows:
\begin{enumerate}
	\item Create a sorted list $c_1 \le \dots \le  c_{n+k}$ of the jump discontinuities $\{l/n\}_{l = 0, \dots n-1} \cup\{l/k\}_{l = 0, \dots k-1}$ of the turning functions.
	\item  Compute lengths $\ell_j = c_{j+1}- c_{j} $.
	\item Compute the sum
	\begin{equation*}
		S = \sum_{j= 1}^{n+k} \ell_j (f_n(c_j)-f_k(c_j))^2.
	\end{equation*}
\end{enumerate}
Sorting takes $O((n+k)\log(n+k))$ time, and steps 2 and 3 each take $O(n+k)$ time. 
\end{proof}

\section{Closed form expressions for special cases} \label{sec:compspec}

The distribution of lengths between critical events of $R_n$ and $R_k$ depends upon the prime factorizations of $n$ and $k$ and their common divisors.  In what follows, we'll give some simplifications for special cases of $n$ and $k$ in which we can give closed formulas for $d_2(R_n,R_k)$. We begin with cases where $n$ and $k$ share common factors.

\begin{proposition}
For positive integers $n, k, a \ge 2$,
\begin{equation}
d(R_{an}, R_n) =  \frac{\pi}{an} \sqrt{\frac{a^2-1}3} \label{atimesn}
\end{equation}
and 
\begin{equation}
d(R_{an}, R_{ak}) = \frac{1}{a}d(R_{n}, R_{k}). \label{totaldiv}
\end{equation} 
\end{proposition}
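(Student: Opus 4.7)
The plan is to derive both identities by direct computation from Prop.\ \ref{genppoly}, which (for regular polygons) removes the need to optimize over critical events and reduces the $2$-turning distance to the explicit formula (\ref{pnot2}). Part (1) is a divisibility bookkeeping exercise; part (2) follows from a clean scaling identity for $f_n$.

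For part (1), I would exploit that the jump set $\{j/n : 0\le j<n\}$ of $f_n$ is a subset of the jump set $\{i/(an) : 0\le i<an\}$ of $f_{an}$, so both functions are constant on each subinterval of the partition of $[0,1]$ into $an$ equal pieces. Writing $i=aj+m$ with $0\le j<n$ and $0\le m<a$, on $[(aj+m)/(an),(aj+m+1)/(an))$ one has $f_{an}(s)=2\pi(aj+m)/(an)$ and $f_n(s)=2\pi(aj)/(an)$, so $f_{an}(s)-f_n(s)=2\pi m/(an)$, depending only on $m$. Hence
\[
\int_0^1 (f_{an}-f_n)^2\,ds
= \frac{1}{an}\cdot\frac{4\pi^2}{(an)^2}\cdot n\sum_{m=0}^{a-1}m^2
= \frac{4\pi^2(a-1)(2a-1)}{6\,(an)^2}.
\]
Combining with $\theta^*(0)^2=\pi^2(a-1)^2/(an)^2$ from (\ref{rot0reg}) and simplifying via $\tfrac{2(2a-1)}{3}-(a-1)=\tfrac{a+1}{3}$ gives $d_2(R_{an},R_n)^2=\pi^2(a^2-1)/(3a^2n^2)$, which is (\ref{atimesn}).

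For part (2), the key is the scaling identity
\[
f_{an}(s) \;=\; \frac{2\pi}{an}\lfloor ans\rfloor \;=\; \frac{1}{a}\,f_n(as),
\]
valid for all $s\in[0,1]$ once one uses the periodic extension (\ref{periodic}) to make sense of $f_n$ on $[0,a]$. Setting $h(u):=f_n(u)-f_k(u)$, this gives $f_{an}(s)-f_{ak}(s)=\tfrac{1}{a}h(as)$. Because $f_n$ and $f_k$ both pick up the \emph{same} increment $2\pi$ per unit under (\ref{periodic}), the difference $h$ is genuinely $1$-periodic. The substitution $u=as$ together with $1$-periodicity then yields
\[
\int_0^1 (f_{an}-f_{ak})^2\,ds = \frac{1}{a^2}\int_0^1 h(as)^2\,ds = \frac{1}{a^2}\int_0^1 h(u)^2\,du,
\]
and an identical argument shows the optimal rotation satisfies $\theta^*_0(R_{an},R_{ak})=\tfrac{1}{a}\theta^*_0(R_n,R_k)$. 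Both terms in (\ref{pnot2}) therefore scale by $1/a^2$, giving $d_2(R_{an},R_{ak})=\tfrac{1}{a}d_2(R_n,R_k)$.

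The main obstacle is bookkeeping rather than a genuine difficulty. I would spend care verifying that $f_{an}(s)=\tfrac{1}{a}f_n(as)$ holds on all of $[0,1]$ (not merely on $[0,1/a]$), which requires the periodic extension (\ref{periodic}); and checking that the variance-like quantity $\int h^2 - (\int h)^2$ transforms cleanly under $u=as$, which is where $1$-periodicity of $h$ (and not the individual $f_n, f_k$) is essential. Everything else reduces to the summation $\sum_{m=0}^{a-1}m^2=(a-1)a(2a-1)/6$ and the identity noted above.
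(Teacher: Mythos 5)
Your proof is correct and follows essentially the same route as the paper's: part (1) via the common refinement into $an$ equal subintervals where $f_{an}-f_n$ depends only on $i \bmod a$, and part (2) via the scaling identity $f_{an}(s)=\tfrac{1}{a}f_n(as)$ together with periodicity of the difference. Your explicit treatment of the $\theta^*$ scaling in part (2) is a slightly more careful rendering of what the paper compresses into "insert with $p=1,2$," but the argument is the same.
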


\begin{proof}

From (\ref{thetaopt}),
\begin{align*}
    &\theta^*(t) = \frac{\pi}n \left( \frac{a-1}{a}\right).
\end{align*}
Next, we observe that the function $f_{an}(s)-f_{n}(s)$ is $1/n$ periodic, with explicit step function form
\begin{equation}
f_{an}(s)-f_{n}(s) = 	 \sum_{i  \in \mathbb Z} \frac{2\pi (i \hbox{ mod } a)}{an} \mathbf 1(s \in [i/an, (i+1)/an)).
\end{equation}  
This yields
\begin{align*}
    &\int_0^1 (f_{an}(s)- f_{n}(s))^2ds = n \sum_{i=1}^{a-1} \left( \frac{1}{an}\right) \left( \frac{2\pi i}{an} \right) ^2 \\
    &=\frac{4\pi ^2}{a^3n^2} \sum_{i=1}^{a-1} i^2 = \frac{2\pi^2(a-1)(2a-1)}{3a^2n^2}.
\end{align*}
Substituting into (\ref{pnot2}) establishes (\ref{atimesn}), since
\begin{align*}
   d_2(R_n, R_k) &=  \left( \frac{2\pi^2(a-1)(2a-1)}{3a^2n^2}-  \left(\frac{\pi}n \left( \frac{a-1}{a}\right)\right) ^2 \right) ^ \frac{1}{2} \\
    &= \frac{\pi}{an} \sqrt{\frac{a^2-1}3}.
\end{align*}

To show  (\ref{totaldiv}), we note that multiplying the number of sides of a regular polygon is equivalent to a rescaling of the turning function.  In other terms, for all $s \in \mathbb R$, 
\begin{equation}
f_{an}(s) = \frac{1}{a} f_n(as).
\end{equation}
It follows that $f_{an} - f_{ak}$ is $1/a$ periodic.
We now can show that, for $p >0 $,
\begin{align}
\int_0^1 (f_{an}(s) - f_{ak}(s))^pds &= \sum_{i = 1}^a \int_{(i-1)/a}^{i/a} (f_{an}(s) - f_{ak}(s))^pds \\
&= \frac{1}{a^{p-1}}\int_0^{1/a} (f_{n}(as) - f_{k}(as))^pds \\
 &= \frac{1}{a^p}\int_0^1 (f_n(s)-f_k(s))^pds.
\end{align}
The above can then be inserted into (\ref{d2break}) with $p = 1,2$ to obtain the result.  \end{proof}

\begin{cor} \label{gcdcor}
For $q = \gcd(n,k)$,
\begin{equation}
d_2(R_{n}, R_{k}) = \frac{1}{q}d_2(R_{n/q}, R_{k/q}).  \label{gcdfact}
\end{equation}
\end{cor}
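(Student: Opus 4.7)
The plan is to observe that the corollary is an immediate application of equation (\ref{totaldiv}) with a suitable choice of parameters. Concretely, since $q = \gcd(n,k)$, we can write $n = q \cdot (n/q)$ and $k = q \cdot (k/q)$, where $n/q$ and $k/q$ are positive integers (in fact, coprime, though that is not needed here). Setting $a = q$ and replacing the ``$n$'' and ``$k$'' of (\ref{totaldiv}) by $n/q$ and $k/q$ respectively gives
\begin{equation*}
d_2(R_n, R_k) = d_2(R_{q(n/q)}, R_{q(k/q)}) = \frac{1}{q} d_2(R_{n/q}, R_{k/q}),
\end{equation*}
which is the desired identity.

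The only point requiring a brief justification is that (\ref{totaldiv}) is applicable, i.e.\ that $n/q, k/q \ge 2$ so that the hypotheses of the preceding proposition are met. If $n = k$, both sides of (\ref{gcdfact}) vanish trivially, so we may assume $n \neq k$ and hence $\max(n/q, k/q) \ge 2$. If one of $n/q$ or $k/q$ equals $1$, say $k/q = 1$, then $q = k$ divides $n$, and we can instead invoke (\ref{atimesn}) with $a = n/k$ to obtain $d_2(R_n, R_k) = \frac{\pi}{n}\sqrt{((n/k)^2-1)/3}$, which matches $\frac{1}{q} d_2(R_{n/q}, R_1)$ once we interpret $R_1$ as a degenerate shape with constant turning function (so that $d_2(R_{n/q}, R_1)$ reduces, via (\ref{pnot2}), to the same computation performed in (\ref{atimesn})). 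Thus the main obstacle, which is really just a bookkeeping point, is handling the edge case where one of $n/q$ or $k/q$ is $1$; outside of that, the corollary follows with no further computation from (\ref{totaldiv}).
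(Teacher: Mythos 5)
Your proof is correct and matches the paper's intent: the corollary is stated without an explicit proof precisely because it is the immediate specialization $a=q$ of (\ref{totaldiv}), which is exactly what you do. Your additional care with the degenerate cases ($q=1$, where the identity is trivial rather than an instance of the proposition's hypotheses $a\ge 2$, and $k/q=1$, where $R_1$ must be interpreted via its constant turning function so that the claim reduces to (\ref{atimesn})) addresses boundary cases the paper leaves implicit, and your computation there is consistent with the stated formulas.
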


If $n$ and $k$ are relatively prime, then typically the gaps between critical events are erratic, and the summation formula (\ref{mainregpoly}) will be required to find the turning distance.  However, we mention one case in which a closed form solution exists between $n$- and $(n+1)$-gons.  Note that we write $f(n)\sim g(n)$ to mean $\lim_{n\rightarrow \infty} f(n)/g(n) = 1$.

\begin{proposition}
	For $n \ge 2$, the turning distance between $R_n$ and $R_{n+1}$ satisfies
\begin{equation}
D_2(R_{n}, R_{n+1})	 = \frac{2n^2+2n-1}{3n^2(n+1)^2},
\end{equation}
and thus $d_2(R_{n}, R_{n+1}) \sim \frac {\sqrt{6}\pi}{3n} .$
\end{proposition}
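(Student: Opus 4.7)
The plan is to apply the summation identity from Prop.~5 directly with $k = n+1$, then exploit the coprimality $\gcd(n, n+1) = 1$ via the Chinese remainder theorem to evaluate the sum in closed form. Setting $k = n+1$ in (\ref{mainregpoly}) gives
\begin{equation*}
D_2(R_n, R_{n+1}) = -\frac{1}{n^2(n+1)^2} + \frac{4}{n(n+1)} \sum_{i=0}^{n(n+1)} \left(\frac{1}{n+1}\left\lfloor \frac{i}{n}\right\rfloor - \frac{1}{n}\left\lfloor \frac{i}{n+1}\right\rfloor\right)^2,
\end{equation*}
so the task reduces to evaluating the sum over $i$.

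First I would rewrite the summand in a tractable form. Writing $r = i \bmod n$ and $r' = i \bmod (n+1)$, a short computation using $\lfloor i/n \rfloor = (i-r)/n$ and $\lfloor i/(n+1) \rfloor = (i-r')/(n+1)$ shows that
\begin{equation*}
\frac{1}{n+1}\left\lfloor \frac{i}{n}\right\rfloor - \frac{1}{n}\left\lfloor \frac{i}{n+1}\right\rfloor = \frac{r' - r}{n(n+1)}.
\end{equation*}
The endpoints $i = 0$ and $i = n(n+1)$ both contribute $0$ (since $n(n+1)$ is divisible by both $n$ and $n+1$), so the sum is unchanged if we restrict to $i \in \{0, 1, \dots, n(n+1)-1\}$. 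By CRT, as $i$ ranges over this complete residue system modulo $n(n+1)$, the pair $(r, r')$ runs bijectively over $\{0, \dots, n-1\} \times \{0, \dots, n\}$, and hence
\begin{equation*}
\sum_{i=0}^{n(n+1)} (r' - r)^2 = \sum_{r=0}^{n-1} \sum_{r'=0}^{n} (r' - r)^2.
\end{equation*}

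The double sum is then a routine computation with the standard formulas for $\sum r$ and $\sum r^2$; expanding $(r'-r)^2$ and collecting terms yields $\frac{n(n+1)(n^2 + n + 1)}{6}$. Substituting back gives
\begin{equation*}
D_2(R_n, R_{n+1}) = -\frac{1}{n^2(n+1)^2} + \frac{2(n^2 + n + 1)}{3 n^2 (n+1)^2} = \frac{2n^2 + 2n - 1}{3 n^2(n+1)^2},
\end{equation*}
as claimed. The asymptotic statement follows by noting $D_2(R_n, R_{n+1}) \sim \frac{2}{3n^2}$, so $d_2(R_n, R_{n+1}) = \pi \sqrt{D_2} \sim \pi \sqrt{2/3}\,/n = \sqrt{6}\pi/(3n)$.

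I expect no serious obstacle: the only mildly subtle step is the CRT bijection together with the check that the endpoint $i = n(n+1)$ can be absorbed (the overcounted residue $(0,0)$ contributes nothing). Everything else is algebraic manipulation of power sums.
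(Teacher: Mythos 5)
Your proof is correct, and it takes a genuinely different route from the paper's. The paper evaluates the integral in (\ref{pnot2}) directly: it partitions $[0,1]$ into the $n$ intervals $I_i = [(i-1)/n, i/n)$ on which $f_n$ is constant, splits each into the two subintervals $I_i^{(1)}, I_i^{(2)}$ determined by the jump of $f_{n+1}$ at $i/(n+1)$, writes out the resulting sum of explicit squared differences, and reduces it with the standard power-sum formulas. You instead start from the floor-function summation formula (\ref{mainregpoly}), observe that the summand collapses to $\bigl((r'-r)/(n(n+1))\bigr)^2$ in terms of the residues $r = i \bmod n$, $r' = i \bmod (n+1)$, and then use the Chinese remainder theorem to turn the single sum over $i$ into a decoupled double sum over $(r,r') \in \{0,\dots,n-1\}\times\{0,\dots,n\}$; I have checked that this double sum does equal $n(n+1)(n^2+n+1)/6$ and that the endpoint term $i = n(n+1)$ is harmless. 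What your approach buys is a cleaner combinatorial structure: the CRT bijection makes the sum factor immediately, and the same residue trick would generalize to any coprime pair $(n,k)$, giving $\sum_{r,r'}(r'k - r n)^2$-type expressions beyond the consecutive case. What the paper's approach buys is self-containedness at the level of the integral (it does not route through the discretized formula (\ref{mainregpoly})) and a partition that makes the geometric origin of each term visible. Both are complete; the asymptotic step is identical in the two arguments.
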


\begin{proof}

From (\ref{thetaopt}),  the optimal rotation between $R_n$ and $R_{n+1}$ is 
\begin{equation}
	\theta^*(0) = \frac{\pi}{n(n+1)}.
\end{equation}
For the computation involving $(f_n-f_{n+1})^2$, consider a partition of $[0,1]$, with $n$ intervals $I_i = [(i-1)/n, i/n)$ for $i = 1, \dots, n$. For $s \in I_i$, $f_n(s) \equiv \pi(i-1)/n$ .  However, $f_{n+1}$ assumes two values in each $I_i$.  For $I_i^{(1)} :=  [(i-1)/n, i/(n+1))$ and $I_i^{(2)} := [i/(n+1), i/n)$, 
\begin{equation}
	f_{n+1}(s) = \begin{cases}
\pi(i-1)/(n+1) & s \in I_i^{(1)},\\
 \pi i/(n+1)  & s \in  I_i^{(2)}.
\end{cases}
\end{equation}
Using closed-form expressions for the sums of integers and squares,  we now compute the turning distance as
\begin{align*}
&D_2(R_n, R_{n+1}) = \frac{1}{\pi^2}\left(\sum_{i = 1}^n\int_{I_i^{(1)}\cup I_i^{(2)}}(f_n(s)-f_{n+1}(s))^2ds - \theta^*(0)^2\right)\\ 
&= 4\sum_{i = 1}^{n} \left[\frac{n+1-i}{n(n+1)}\left( \frac{i-1}n- \frac{i-1}{n+1} \right) ^2+ \frac{i}{n(n+1)}\left( \frac{i-1}n- \frac{i}{n+1} \right)^2\right]-\frac{1}{n^2(n+1)^2} \\
&=\frac{4}{n^3(n+1)^3} \sum_{i=1}^{n} \left( i^2(1-n) + i(n^2-2) + (n+1) \right) -\frac{1}{n^2(n+1)^2} \\
&= \frac{2n^2+2n-1}{3n^2(n+1)^2}.
\end{align*}
\end{proof}

We can also compute turning distances between a general polygon and a circle.  
The unit circle $C$, positioned so that its lowest point is placed at the origin, has the turning function of $\Theta_{C}(s) = 	2\pi s$. For an $n$-sided polygon, denote side lengths, beginning from the segment on the $x$-axis, as $s_0, s_1, \dots, s_n$, where $s_0 = 0$. Denote cumulative perimeters as $x_i = \sum_{k = 0}^i s_k$.  Finally, denote corner turning angles $\theta_i$ as the right-sided limit of the turning function at the $ith$ corner traversed, with $\theta_0 = 0$ and $\theta_n = 2 \pi $. For a polygon $P$ with $n\ge 2$ sides, we then have the following simplifications for $d_2(C,P)$.

\begin{proposition}
Let $g(s) = 2\pi s$ be turning function for the circle $C$, and  let $h(s)$ be the turning function for an $n$-sided polygon $P$ with cumulative perimeters $x_0, \dots, x_n$ and corner turning angles $\theta_0, \dots, \theta_{n-1}$. Then 
\begin{align}
	D_2(C,P) = \frac{1}{6 } \sum_{i = 1}^n \left[(2 x_i- \theta_{i-1}/\pi)^3 - (2 x_{i-1}- \theta_{i-1}/\pi)^3\right] - \left(1-  \sum_{i=1}^{n} \theta_{i-1}(x_{i} - x_{i-1})/\pi\right)^2 \label{circturn}
\end{align}
	and
\begin{equation}
	d_2(C, R_n) = \frac{\sqrt{3}\pi}{3n}. \label{circreg}
\end{equation}
\end{proposition}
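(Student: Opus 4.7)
The plan is to exploit the special structure of the circle's turning function $g(s)=2\pi s$, which satisfies $g(s+t)=g(s)+2\pi t$, so that a horizontal shift in $s$ is equivalent to a vertical shift in $\theta$. Substituting this identity into the definition of $d_2$, the inner expression becomes $g(s)-h(s)+\theta+2\pi t$; reparametrizing $\theta'=\theta+2\pi t$ collapses the two-variable optimization to a one-variable optimization over $\theta'\in\mathbb{R}$. In particular, I may set $t=0$ and apply Proposition \ref{l2lem} directly (even though $C$ is not a polygon, the same quadratic-in-$\theta$ argument giving $\theta^*(0)=\int_0^1(h-g)\,ds$ and $d_2(C,P)^2=\int_0^1(g-h)^2\,ds-\theta^*(0)^2$ goes through verbatim).

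Next I would compute each of these two ingredients explicitly. Since $h$ is constant equal to $\theta_{i-1}$ on the edge $[x_{i-1},x_i)$, partition $[0,1]$ along the cumulative perimeters and write
\begin{equation*}
\int_0^1 (g(s)-h(s))^2\,ds=\sum_{i=1}^n \int_{x_{i-1}}^{x_i}(2\pi s-\theta_{i-1})^2\,ds.
\end{equation*}
A substitution $u=2\pi s-\theta_{i-1}$ turns each integrand into a cube antiderivative, yielding
\begin{equation*}
\int_{x_{i-1}}^{x_i}(2\pi s-\theta_{i-1})^2\,ds=\frac{\pi^2}{6}\bigl[(2x_i-\theta_{i-1}/\pi)^3-(2x_{i-1}-\theta_{i-1}/\pi)^3\bigr].
\end{equation*}
In parallel, $\theta^*(0)=\int_0^1 h\,ds-\int_0^1 g\,ds=\sum_{i=1}^n \theta_{i-1}(x_i-x_{i-1})-\pi$, so $\theta^*(0)/\pi=\sum_i \theta_{i-1}(x_i-x_{i-1})/\pi-1$. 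Dividing the whole identity $d_2(C,P)^2=\int(g-h)^2-\theta^*(0)^2$ by $\pi^2$ and using $D_2=(d_2/\pi)^2$ yields formula (\ref{circturn}).

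Finally, for the specialization to $R_n$, I substitute the regular-polygon data $x_i=i/n$ and $\theta_{i-1}=2\pi(i-1)/n$. A key simplification is that $2x_i-\theta_{i-1}/\pi=2/n$ and $2x_{i-1}-\theta_{i-1}/\pi=0$ for every $i$, so the cubic sum telescopes trivially to $n\cdot(2/n)^3/6=4/(3n^2)$. For the rotation term, $\sum_i \theta_{i-1}(x_i-x_{i-1})/\pi=(2/n^2)\sum_{i=1}^n(i-1)=(n-1)/n$, leaving $1-(n-1)/n=1/n$, whose square is $1/n^2$. Combining, $D_2(C,R_n)=4/(3n^2)-1/n^2=1/(3n^2)$, and taking $d_2=\pi\sqrt{D_2}$ produces (\ref{circreg}).

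I do not anticipate a genuine obstacle; the only conceptual step is the $t=0$ reduction via the affine relation $g(s+t)=g(s)+2\pi t$, and the only care needed in the computation is to keep the factors of $\pi$ consistent between the definitions of $d_2$ and of $D_2=(d_2/\pi)^2$.
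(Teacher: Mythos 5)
Your proposal is correct and follows essentially the same route as the paper: reduce to $t=0$ using the affine identity $g(s+t)=g(s)+2\pi t$, split the integral over the edges to get the cubic antiderivatives, subtract $\theta^*(0)^2$, and specialize to $x_i=i/n$, $\theta_{i-1}=2\pi(i-1)/n$ for $R_n$. The only (minor, and arguably cleaner) difference is that you justify the $t=0$ reduction directly by reparametrizing $\theta'=\theta+2\pi t$, whereas the paper first extends its critical-events and optimal-rotation lemmas to the circle via an $L^2$ limit of regular polygons with dominated convergence; your sign convention for $\theta^*(0)$ is opposite to the paper's but immaterial since only its square enters.
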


\begin{proof} 
By taking the $L_2$ limit of $f_k \rightarrow g(s) = 2 \pi s$, we find that (\ref{d2break}) also holds for $d_2(f_n, h(s))$. This convergence follows the dominated convergence theorem with the constant uniform bound $|f_k-h(s)| \le 2\pi+\max_{s \in [0,1]}{h(s)}$.  Furthermore, since $g(s+t) = g(s)+ 2\pi t $ (a limiting version of (\ref{shiftisrot})), we can repeat the proof for Prop. \ref{genppoly} and set $t = 0$ for the optimal shifting, meaning (\ref{pnot2}) also holds with $d_2(C,P)$. 

We can now compute (\ref{d2break}) by breaking up the integral for each side of $P$. We find 
\begin{align}
\int_0^1 (g(s) - h(s))^2ds &= 
\sum_{i = 1}^n \int_{x_{i-1}}^{x_i} (2 \pi s-\theta_{i-1})^2ds \\&=   \frac{\pi^2}{6} \sum_{i = 1}^n (2 x_i- \theta_{i-1}/\pi)^3 - (2 x_{i-1}- \theta_{i-1}/\pi)^3.
\end{align}
Similarly, for the optimal rotation term, 
\begin{align}
\theta^*(0) &= \int_0^1 (g(s) - h(s))ds \\&= \int_0^1 2\pi sds - \sum_{i = 1}^n \int_{x_{i-1} }^{x_{i}}	\theta_{i-1}ds \\&= \pi\left(1 - \sum_{i=1}^{n} \theta_{i-1}(x_{i} - x_{i-1})/\pi\right).
\end{align}
To show (\ref{circreg}), we can either substitute  $x_i = i/n$ and $\theta_i = 2 \pi i/n$ into (\ref{circturn}), or simply take the $L_2$ limit of (\ref{atimesn}) as   $a \rightarrow \infty$.  This limit exists by again applying the dominated convergence theorem, as $f_{an}(s)-f_n(s) \rightarrow  g(s)-f_n(s)$ pointwise, and $|f_{an}-f_n|$ are uniformly bounded by $4 \pi $.
\end{proof}

\section{Archimedean Lattices} \label{sec:arch}

For an example of ordered microstructure, we consider Archimedean lattices, a class of periodic tilings which are vertex-transitive, meaning that between any two vertices $u$ and $v$, there exists a graph isomorphism that maps $u$ to $v$.  There are 11 such lattices \cite[Ch. 2]{grunbaum1987tilings}, each of which are comprised of regular polygons. These lattices serve as a nontrivial class of microstructures for which we can compute exact values of turning disorder.  They also demonstrate ways in which the disorders differ.  In particular, since tiles are regular polygons, regular turning disorders are identically zero. However, with the exception of the hexagonal (honeycomb) lattice having zero hexagonal turning disorder,  Archimedean lattices have  nonzero values for both circular and hexagonal disorders.  
The three lattices we will consider are the $(3,12^2), (4,8^2)$, and $(4,6, 12)$ lattices, shown in Fig. \ref{archfig}.  These are the three Archimedean lattices which have more than one tile and are trivalent, or having all vertices of degree 3.  

For a lattice $A$ tiling the plane, we understand the various disorders as the limit of disorders restricted to the square $S_n = [-n,n] \times [-n,n]$ as $n\rightarrow \infty$, written as
\begin{equation}
	\overline{\mathcal D}(A) = \lim_{n\rightarrow \infty} \mathcal D(S_n \cap A).
\end{equation}
From this definition, we immediately obtain 
\begin{equation}
	\overline{\mathcal D}(A) = \overline{\mathcal D}_\mathrm{w}(A) = 0
\end{equation}
for all Archimedean lattices.  This holds because all interior tiles of the lattice in $S_n$ are regular polygons whose corresponding terms in (\ref{regdist}) are zero. These interior tiles comprise an arbitrarily large proportion of the lattice as $n\rightarrow \infty$.

For other disorders, this limit is well-defined for Archimedean lattices, as each lattice can be tiled with a single \textit{fundamental region}, defined as a finite subregion of the plane containing fractions of tiles which generate the tiling when symmetries (rotations and translations) are applied \cite{fundper}.  Archimedean tilings are periodic, and thus have at least one fundamental region.  In a fundamental region, denote area weighted and unweighted proportions of regular $n$-gons as $p_n$ and $q_n$, respectively.   Although fundamental regions may not be unique, it still holds that

\begin{proposition}
Any fundamental region for an Archimedean lattice contains the same proportions $p_n$ and $q_n$ for $n \ge 3$.
\end{proposition}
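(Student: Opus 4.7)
The plan is to show that both $p_n$ and $q_n$ can be identified with intrinsic densities of the Archimedean lattice that make no reference to a particular fundamental region. Let $T$ denote the translation subgroup of the full symmetry group of $A$, so that $T$ is a rank-$2$ lattice in $\mathbb{R}^2$. A fundamental region $F$ is characterized by the property that $\{F + t : t \in T\}$ covers $\mathbb{R}^2$ with overlaps of measure zero. A preliminary fact I would record is that every such $F$ has the same Lebesgue measure, namely the covolume of $T$; this is a standard consequence of the tiling property and the translation-invariance of Lebesgue measure.

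For the area-weighted proportion, let $U_n$ denote the union of all $n$-gonal tiles of $A$. Since $A$ is $T$-invariant, so is $U_n$. The identity
\begin{equation*}
|F \cap U_n| = |F| \cdot \rho_n, \qquad \rho_n := \lim_{N \to \infty} \frac{|S_N \cap U_n|}{|S_N|},
\end{equation*}
then follows by partitioning $S_N$, up to a boundary term of negligible area, into $T$-translates of $F$, each of which contributes $|F \cap U_n|$ to the numerator. This forces $p_n = |F \cap U_n| / |F| = \rho_n$, which is intrinsic to $A$.

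For $q_n$, the task is first to assign an unambiguous fractional count to tile pieces lying in $F$. The natural convention is that each $n$-gonal tile $P$ of $A$ is partitioned up to measure zero by its intersections $\{P \cap (F + t) : t \in T\}$, and therefore contributes a total weight of $1$ distributed across the translates of $F$. Consequently, the fractional count of $n$-gons in $F$ equals the number of $T$-orbits of $n$-gonal tiles in $A$, a purely lattice-theoretic quantity; dividing by the total orbit count over all tile types then yields $q_n$.

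The main obstacle is the bookkeeping needed to justify the interpretation of fractional tile counts: one must verify that the convention above agrees with whatever geometric partition of boundary pieces a given $F$ produces, and that the sum over $T$-translates collapses correctly without double counting. Once this accounting is settled, the $T$-invariance of $A$ and of each $U_n$ makes the independence of both $p_n$ and $q_n$ from the choice of $F$ immediate.
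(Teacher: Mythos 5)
Your proposal is correct and takes essentially the same route as the paper: both arguments identify $p_n$ and $q_n$ with intrinsic densities of the lattice by exhausting large squares with copies of the fundamental region, discarding a boundary contribution that is negligible in the limit, so that independence from the choice of $F$ follows from uniqueness of the limit. The only mismatch is that the paper's definition of a fundamental region allows rotations as well as translations, whereas your covering $\{F + t : t \in T\}$ uses only the translation subgroup; this is harmless, since the same argument runs with $T$ replaced by the full symmetry group once you note that its elements are isometries carrying $k$-gons to $k$-gons, hence preserving both $|F \cap U_n|$ and the fractional tile counts.
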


\begin{proof}
For any fundamental region $F \subset \mathbb R^2$, as $n \rightarrow \infty$ the interior of the square $S_n$ will contain rotated and translated copies of $F$.  Along the boundary of $S_n$ there may  be incomplete pieces of $F$, but these regions comprise an arbitrarily small proportion of all tiles as $n\rightarrow \infty$ when considering weighting by either total polygons or  area.  Each of these full copies of $F$ in $S_n$ contain the same fractions of regular polygons, so as $n\rightarrow \infty$, the proportion of regular $n$-gons in $S_n$ approach those of $F$.  The same argument can be made for a different fundamental region $\tilde F$.  By uniqueness of limits, $\tilde F$ must have the same $n$-gon proportions as $F$.  
\end{proof}
    Fundamental regions are shown in Fig. \ref{archfig}, shaded in gray. For the $(4,8^2)$ lattice, a fundamental region contains the area of 1/2 of a square and 1/2 of an octagon. the $(4,6,12)$ lattice, the fundamental cell contains the area of 1 hexagon, 3/2 of a square, and 1/2 of a  dodecagon.  For the $(3,12^2)$ lattice, the fundamental cell contains the area of 1 triangle and 1/2 of a  dodecagon.

\begin{figure}
\centering
	\includegraphics[width = \linewidth]{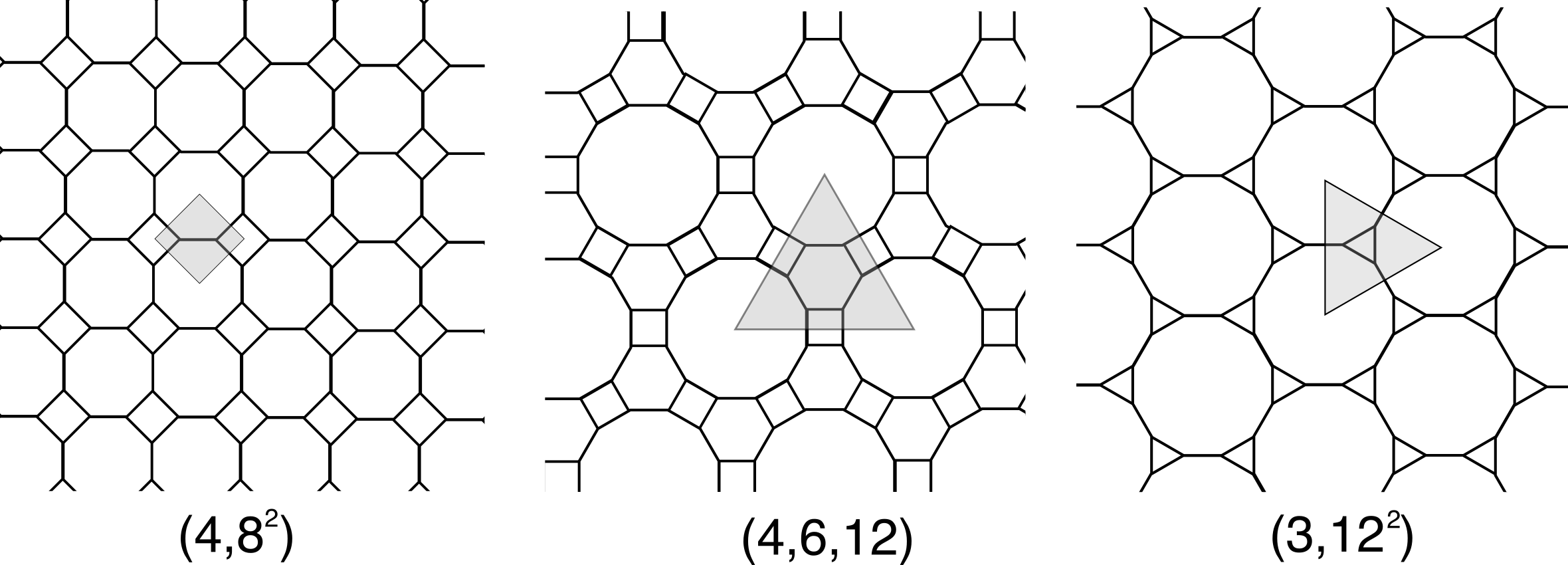}
	\caption{Three examples of Archimedean lattices, with a fundamental region for each lattice shaded in gray.} \label{archfig}
\end{figure}

Turning disorders can then be computed as weighted sums with respect to contributions of regular polygons in fundamental regions. Exact expressions can be found using (\ref{mainregpoly}) with the Python package \texttt{sympy} \cite{meurer2017sympy} for simplifying fractions and roots. We comment that for the hexagonal lattice $A(6^3)$, which is also Archimedean and trivalent, all turning disorders are equivalent to computing $d_2(R_6, P)$ for some ordered shape $P$.  In particular, for hexagonal and regular disorders, the disorder is simply $d_2(R_6, R_6) = 0$.  The circular disorder, from (\ref{circreg}), is $d_2(R_6, C)  = \frac{\sqrt 3\pi}{18} = 0.3023$.

For unweighted turning disorders, we use unweighted probabilities $q_k$ of fundamental regions, which are defined as the fraction of regular $k$-gons in the regions divided by the total fraction of regular polygons  which comprise the fundamental region. These are 
\begin{align*}
q_3 &= 2/3, \quad q_{12} = 1/3,  &(\mathbf{3,12^2}) \\
q_4 &= 1/2, \quad  q_8  = 1/2,  &(\mathbf{4, 8^2})\\
q_4 &= 1/2, \quad q_6 = 1/3, \quad q_{12} = 1/6.  &(\mathbf{4,6, 12})
\end{align*}
Unweighted turning disorders can then be computed as a weighted average of turning distances under the various $q_k$'s.  For instance, using (\ref{atimesn}), the unweighted hexagonal disorder for $A(3,12^2)$ can be quickly calculated as
\begin{align}
\overline{\mathcal{D}}_6(A(3,12^2)) &= q_3 d_2(R_3, R_6) + q_{12} d_2(R_{12},R_6) \\ &= \frac{2}{3} \cdot \frac{\pi}{6} + \frac 13 \cdot \frac{\pi}{12} = \frac{5\pi}{36} = 0.4363.
\end{align}

Let $p_k$ be the probability that a randomly chosen point in a fundamental region belongs to a regular $k$-gon.  These can be computed as fractions involving areas $A_k$ of a regular $k$-gon with a side length of 1.  In particular, we use $A_3 = \sqrt 3/4, A_4 = 1, A_6 = 3\sqrt 3/2, A_8 = 2 + 2\sqrt 2$ and $A_{12} = 6 + 3 \sqrt 3$. Area-weighted probabilities are then calculated as
\begin{align*}
&p_3 = 7-4\sqrt 3 = 0.0718, \quad p_{12} = 4\sqrt 3-6 = 0.9282,  &(\mathbf{3,12^2})\\
&p_4 = 3- 2\sqrt 2 = 0.1716, \quad p_8 = 2\sqrt 2 - 2= 0.8284,  &(\mathbf{4, 8^2}) \\
&p_4 = \frac{2\sqrt 3- 3}3 = 0.1547, \quad p_6 = 2-\sqrt 3= 0.2679, \quad p_{12} = \frac{\sqrt 3}{3}= 0.5774.  &(\mathbf{4,6,12})
\end{align*}
Weighted turning disorders can now be computed using $p_k$.  For instance, the weighted hexagonal disorder is given by
\begin{align}
\overline{\mathcal{D}}_{6,\mathrm{w}}(A(3,12^2)) &= p_3 d_2(R_3, R_6) + p_{12} d_2(R_{12},R_6)\\ &= (7-4\sqrt 3)\cdot \frac{\pi}{6} + (4\sqrt 3-6)\cdot \frac{\pi}{12} = \frac{\pi}3(2-\sqrt 3) =  0.2806.   
\end{align}

\begin{table}
  \begin{tabular}{|c||c |c|c|} \hline
& \centering$(4,8^2)$ & \centering $(3,12^2)$ &  $(4,6,12)$ \\ \hline
   & \multicolumn{3}{c|}{\textbf{Exact expression} } \\ \hline
 $\overline{\mathcal{D}}_{6}$ & {$\frac{\pi}{144}(2\sqrt{33}+\sqrt{69}) $} & $\frac{5\pi}{36}$ & $\frac{\pi}{72}(\sqrt{33}+1)$  \\ \hline
$\overline{\mathcal{D}}_{6,\mathrm w}$ & $\frac{\pi}{36}(3\sqrt{33}+\sqrt{138}-2\sqrt{66}-\sqrt{69}) $ & $\frac{\pi}{3}(2-\sqrt 3) $  & $\frac{\pi}{36}(2\sqrt{11}+\sqrt{3}-\sqrt{33})$\\ \hline
 $\overline{\mathcal{D}}_{\mathrm{c}}$ & $\frac{\sqrt 3 \pi}{16}$ & $\frac{\sqrt 3 \pi}{12}$ & $\frac{7\sqrt 3 \pi}{108}$  \\ \hline
$\overline{\mathcal{D}}_{\mathrm{c},\mathrm{w}}$ & $\frac{\pi}{12}(2\sqrt{3}- \sqrt{6})$ & $\frac{ \pi}{18}(11\sqrt{3}- 18) $ & $\frac{\pi}{36}(1+\sqrt 3) $ \\ \hline 
   & \multicolumn{3}{c|}{\textbf{Decimal representation}} \\ \hline
 $\overline{\mathcal{D}}_{6}$ & $0.4319$ & $ 0.4363$ & $ 0.2942$  \\ \hline
$\overline{\mathcal{D}}_{6,\mathrm w}$ & $ 0.3863$ & $  0.2806$  & $ 0.2287$\\ \hline
 $\overline{\mathcal{D}}_{\mathrm{c}}$ & $ 0.3401$ & $0.4534$ & $0.3527 $  \\ \hline
$\overline{\mathcal{D}}_{\mathrm{c},\mathrm{w}}$ & $ 0.2656$ & $ 0.1837$ & $ 0.2384$ \\ \hline
 \end{tabular}
 \caption{Turning disorders for Archimedean lattices.} \label{archtable}
 \end{table}

In Table \ref{archtable}, we display weighted and unweighted turning distances for each of the lattices.  There are several observations which show how the different disorders can capture different features of the lattices:
\begin{itemize}
\item First, we note that the $(3,12^2)$ lattice produces the smallest disorder, with $\overline{\mathcal{D}}_{\mathrm{c}, \mathrm{w}} = 0.1837$.  Here, 12-gons take up $p_{12} = 0.9282$ of the lattice area, and produce relatively small distances $d_2(R_{12}, C)= \sqrt{3}\pi/36=  0.1511$.  Removing area weighting, however, produces the largest disorder of the lattices, with $\overline{\mathcal{D}}_{\mathrm{c}} = 0.4534$, largely due to the relatively larger distance of $d_2(R_3, C) = \sqrt{3}\pi/9 = 0.6046$.

\item While the $(4,6,12)$ lattice has three types of tiles, it gives the lowest value of hexagonal disorders.  This is largely due to the presence of hexagons which have zero hexagonal distance. 

\item Both squares and octagons are ``two sides away" from the hexagon, but we find that $R_8$ is closer to $R_6$ than $R_4$ for the turning distance, with $d(R_4, R_6) = \sqrt{33}\pi/36 = 0.5013$ and $d(R_8, R_6) = \sqrt{69}\pi/36 = 0.3624$.  Therefore, since weighting by areas gives more bias to octagons, $\overline{\mathcal{D}}_{6,\mathrm w}(A(4,8^2)) < \overline{\mathcal{D}}_{6}(A(4,8^2))$.  
 
\item In the examples shown thus far, weighting by area reduces disorder for both hexagonal and circular disorder, but as we will see for the rupturing process in Sec. \ref{subsec:rupture}, this does not occur in general. 

\end{itemize}

\section{Examples of disordered microstructure} \label{sec:dis}

In what follows, we will study two stochastic processes on trivalent networks.  In both of these processes, typical cells are not regular polygons or circles, and so the general algorithm for the turning distance needs to be applied to calculate the hexagonal and regular disorders.  The circular disorder is computed using (\ref{circturn}).

\subsection{Spring networks with T1 moves} \label{subsec:t1}

\begin{figure}
\centering
	\includegraphics[width = \linewidth]{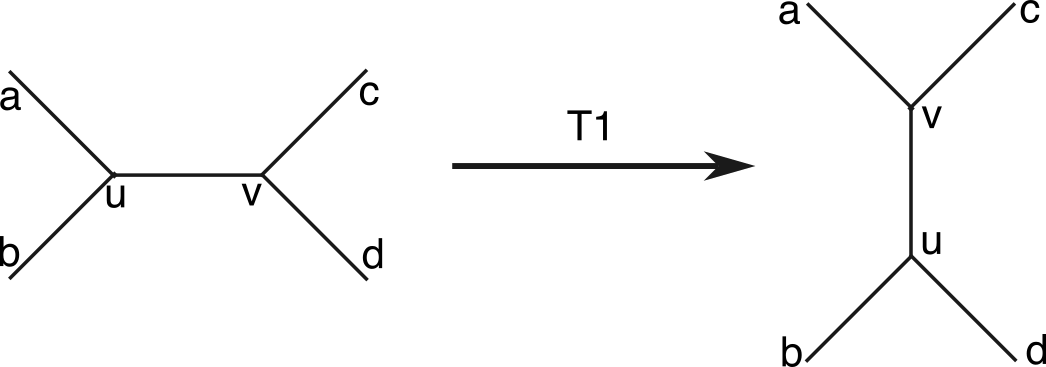}
	\caption{A T1 move.} \label{t1_move}
\end{figure}

For our first example of a stochastic process on networks with increasing disorder, we consider a hexagonal lattice whose edges undergo a series of T1 moves followed by repositioning of vertices from Tutte's method of drawing a planar graph.  See Fig.  \ref{t1_move} for a visualization of a single T1 move. In this figure, we consider an edge $(u,v)$ and its vertex neighbors $\{a,b,c,d\}$.  To perform a T1 move on the edge $(u,v)$ we change the connectivity of  $u$ to now be adjacent to vertices $b$ and $d$,  and $v$ to be adjacent to $a$ and $c$.   Denoting $C_n$ for a cell with $n$ sides, four cells change their number of sides from a T1 move through the reaction
\begin{equation}
	C_i + C_j + C_k + C_l \rightharpoonup C_{i +1} +C_{j+1}+ C_{k-1} + C_{l-1}. \label{reactt1}
\end{equation}
For our simulations, we forbid reactions to occur which result in cells with fewer than three sides. This is done to keep the graph simple (i.e., having at most a single edge between two vertices and no loops).

The reaction  maintains planarity of the network and can be described completely through adjacency matrices.  By selecting random edges with uniform probability, we can define a discrete-time stochastic process on the network which undergoes a large number of T1 moves.   When trying to represent multiple T1 processes as a geometric graph,  it becomes readily apparent that inserting edges in an ad-hoc manner quickly becomes unmanageable.  A natural way to address the problem of vertex assignment is to employ the well-known Tutte algorithm for drawing a planar graph \cite{tutte1963draw}.  Given a collection of locations for pinned vertices on the boundary of a convex set in the plane and an adjacency matrix for a 3-regular graph, Tutte's algorithm gives a planar representation of the network in which all vertices are contained in the convex set, and edges between vertices are drawn as line segments.  This is achieved by interpreting edges as linear springs and solving the associated force balance equations which form a linear system. Specifically, to compute the $n\times 2$ matrix $P_2$ of the interior vertices, given the pinned $m\times 2$ matrix of boundary vertices $P_1$, along with the adjacency matrix $A$ and diagonal degree matrix $D$ of the network, we compute (see \cite{tuttecmu})
\begin{equation}
P_2 = -L_2^{-1}BP_1,
\end{equation}
where $B$ is the $n \times m$ matrix of $-A$ restricted to connections between interior and boundary vertices, and $L_2$ is the $n \times n$ matrix $D-A$ restricted to interior vertices.

\begin{figure}
\centering
	\includegraphics[width = \linewidth]{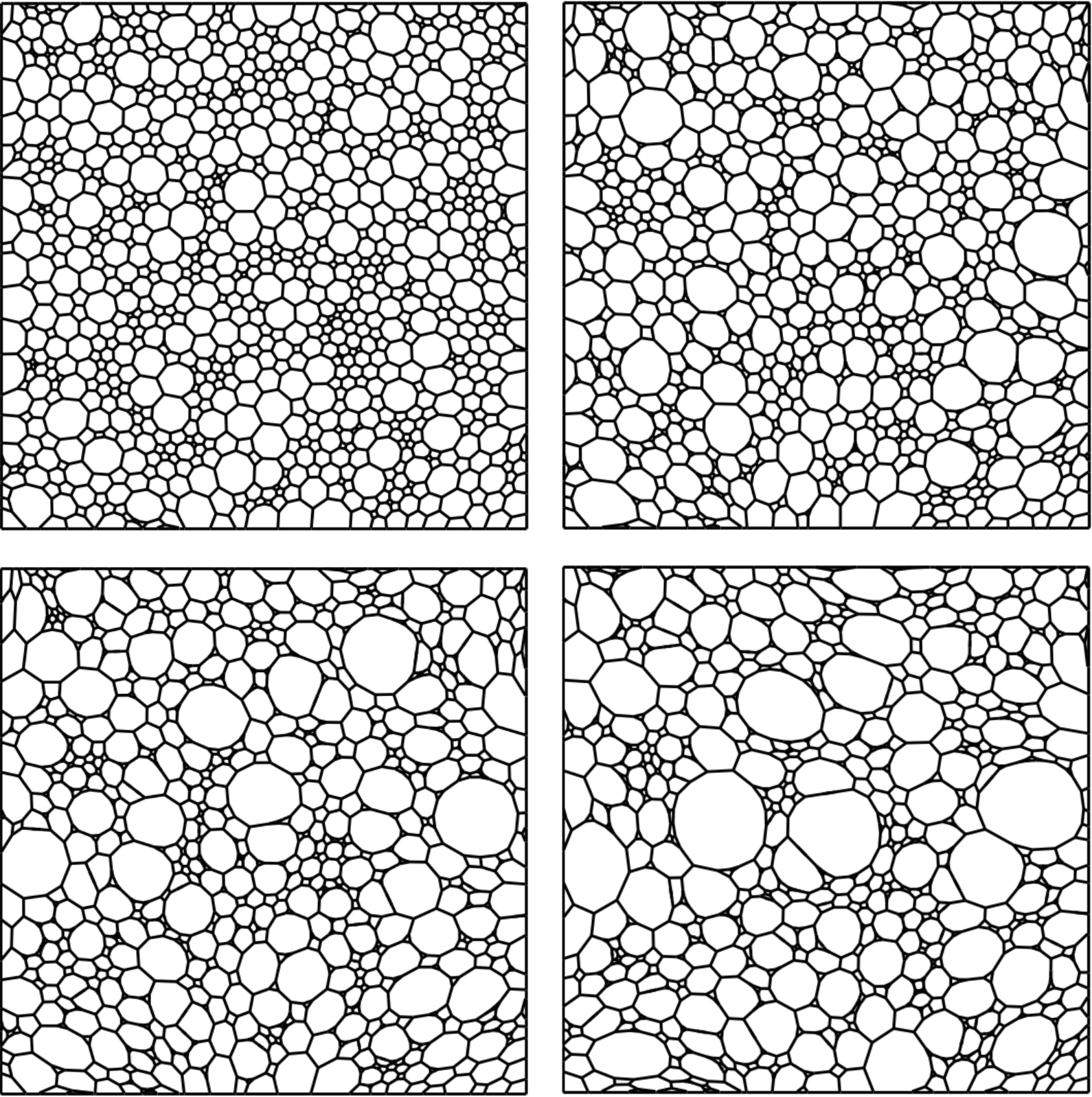}
	\caption{Snapshots of the T1 process with initial conditions of a Voronoi diagram with 1000 cells after the Tutte algorithm. From top left to bottom right: the process after 0, 1000, 2000, and 3000 T1 moves.} \label{t1sim}
\end{figure}

For simulating T1 moves on the spring network, initial conditions  were generated using a Voronoi diagram with 1000 randomly selected site points on a unit square.  The vertices on the boundary of the square were pinned, and the Tutte algorithm was applied to produce initial conditions for the graph considered as a network of springs.  This changed cells areas but not topological characteristics of the network such as the adjacency matrix.  In certain instances, we found that the Tutte algorithm sometimes produced vertices which were nearly incident.  To avoid numerical issues with computing angles,  each pair of vertices were identified when their distance was within a tolerance of $<10^{-6}$.  In a few cases, the number of vertices in a polygon were reduced to one or two points. Cell areas for both of these scenarios were nearly zero, and contribute almost nothing to weighted disorders. For unweighted measures,  we approximated the shape of polygons depending on the number of remaining points. 
In the case of two points, the polygon has a very high aspect ratio, and so we approximated the turning function by a ``$2$-gon" $R_2$, with a turning function given in (\ref{r2turn}).
For polygons $P_n$ which originally had $n$ sides before identifying vertices, we computed $d_2(R_2, R_n)$ using formula (\ref{mainregpoly}) for finding $\mathcal D$.  For hexagonal turning disorder, we computed $d_2(R_2, R_6) = \sqrt{6}\pi/9$, and for the circular disorder we computed $d_2(R_2, C) = \sqrt{3}\pi/6$.  In the case of one remaining point (whose appearance was quite rare in simulations, comprising about 0.2\% of cells), we assumed that the cell with originally $n$ sides was approximately a regular $n$-gon.  

In Fig. \ref{t1sim}, we show snapshots of a sample path of the network process with 3000 T1 moves applied to a network with 1000 cells.  The Tutte algorithm guarantees that all cells are convex. While cells are neither created nor destroyed from T1 moves, the microstructure visually appears to be coarsening as disparities of areas between cells increase.  Those cells with less area tend to have higher aspect ratios than larger cells (which also typically have larger numbers of sides).

In Fig. \ref{turn_plots}, we plot the number of T1 moves against the six turning disorders.  The most apparent difference between distances is that unweighted disorders increase at approximately constant rates, while weighted disorders remain mostly constant.  When restricted to either weighted or unweighted disorders, we have an ordering of $\mathcal D_{\mathrm{c}}< \mathcal D <\mathcal D_{6}$.  For unweighted disorders,  $\mathcal D_{\mathrm{c}}$ and $\mathcal D$ are nearly identical for low numbers of T1 moves, and about $10\%$ smaller than $\mathcal{D}_6$.  For larger amounts of moves, the average aspect ratio of cells increases, and $\mathcal D$ becomes more aligned with $\mathcal D_6$. This  phenomenon is reflected in Fig. \ref{ar_plot}, in which increasing aspect ratios results in larger values of the regular turning distance.

\begin{figure}
\centering
\includegraphics[width =\linewidth]{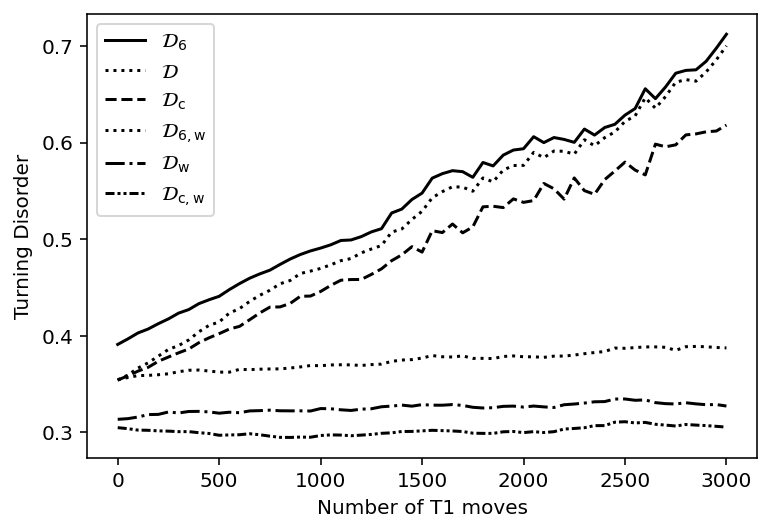}
	\caption{Turning disorders  for the T1 process over 3000 T1 moves.} \label{turn_plots}
\end{figure}

\subsection{Polygonal edge rupturing} \label{subsec:rupture}
\begin{figure}
\centering
	\includegraphics[width = \linewidth]{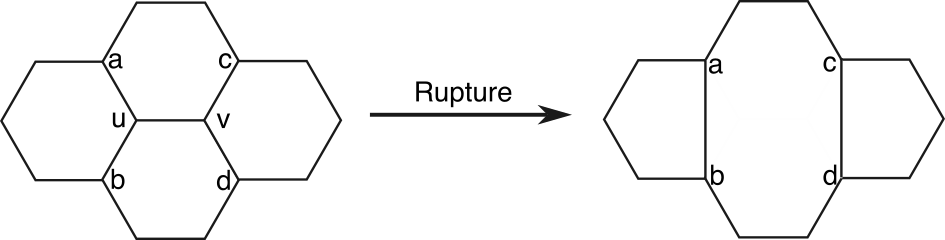}
	\caption{A polygonal edge rupture.} \label{rupture_move}
\end{figure}

For a stochastic process on networks which produces nonconvex polygons, we consider an example of rigid edge rupture studied in \cite{klobusicky2021markov}. Here, we take initial conditions of a hexagonal lattice and perform operations whose topological transitions are similar to rupturing an edge in a two-dimensional foam.  An example of polygonal edge rupture is given in Fig. \ref{rupture_move}. Here, the edge $(u,v)$ and all edges containing $u$ or $v$ are removed, and new edges $(a,b)$ and $(c,d)$ are created by inserting straight line segments.  In actual foams, cell walls have constant curvature and will immediately rearrange after a rupture to satisfy the Herring conditions, in which two edges meet at a vertex at $2 \pi/3$ radians (see \cite{klobusicky2024planar} for snapshots of the physical process).  For our purposes, there are two benefits of fixing vertex positions.  First, we find that performing multiple ruptures produces highly nonconvex  and polygonal cells, for which we confirm whether turning disorders produce large values.  Furthermore, the minimum distance between vertices in this process can only increase.  As opposed to the T1 process, we have no need to use approximations for when vertices are nearly incident.

Like the T1 processes in Section \ref{subsec:t1}, we denote $C_n$ as a cell with $n$ sides, and observe that a single polygonal rupture affects the side numbers of four cells $(C_i, C_j, C_k, C_l)$ adjacent to some edge.  The topological reactions are quite different, however, with
\begin{align}
	&C_i + C_j  \rightharpoonup C_{i +j - 4}, \qquad &\hbox{\textbf{(face merging)}}\\& C_k \rightharpoonup C_{k-1}, \quad  C_l \rightharpoonup C_{l-1}. &\hbox{\textbf{(edge merging)}}\label{react_rupture}
\end{align}
 The second-order ``face-merging" reaction $C_{i} + C_{j} \rightharpoonup C_{i +j - 4}$ is responsible for the generation of very large cells when the rupture operation is repeated, and carries similar ``gelation" properties to the simpler reaction $A_i + A_j \rightharpoonup A_{i+j}$ for the sticky particle dynamics described by the Smoluchowski coagulation equation \cite{klobusicky2021markov,aldous1999deterministic}.  As with the T1 process, we forbid edge ruptures which create cells with fewer than three sides.

\begin{figure}
\centering
	\includegraphics[width = \linewidth]{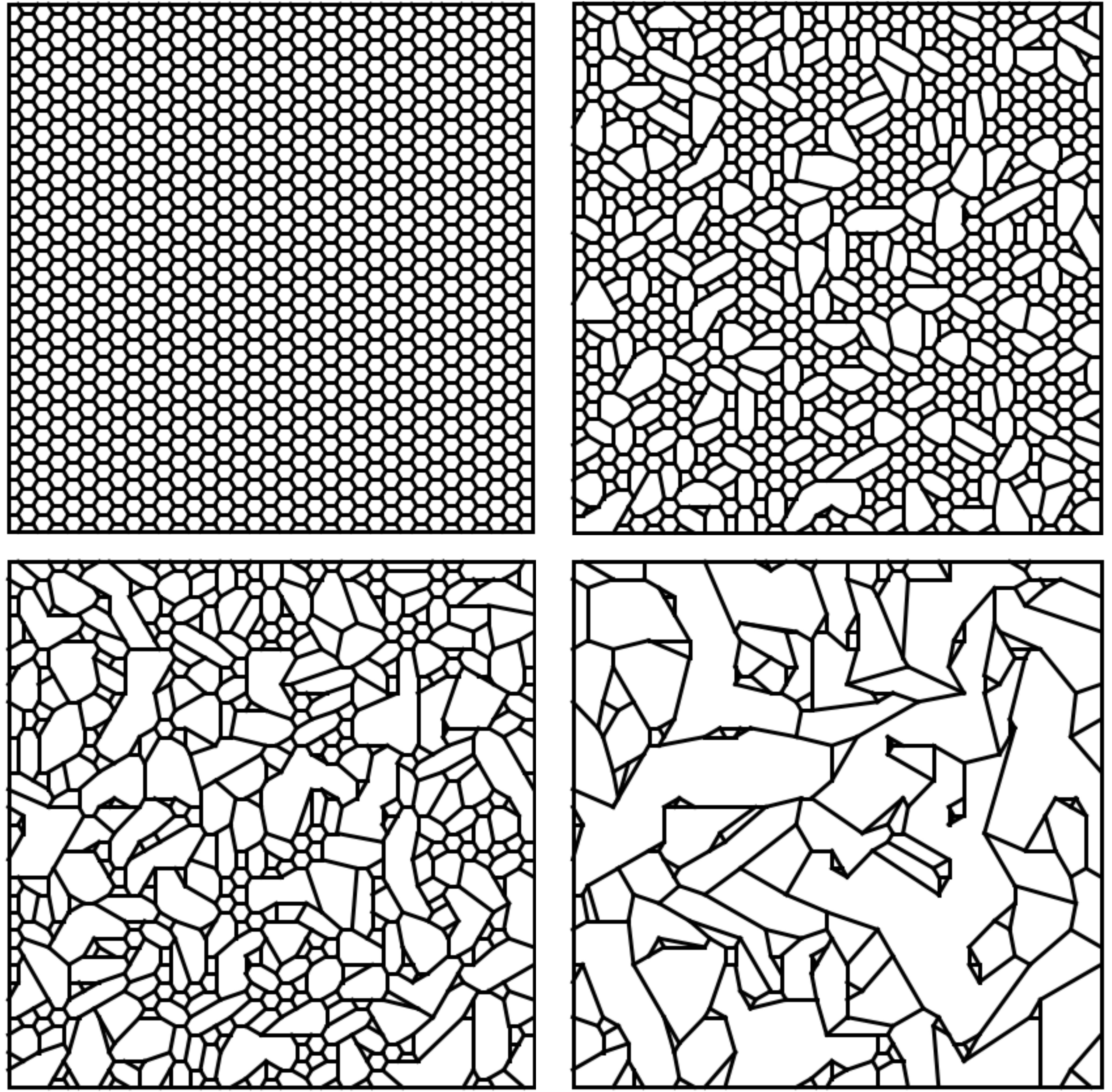}
	\caption{Snapshots of the rupture process with initial hexagonal lattice conditions. From top left to bottom right: the process after 0, 300, 600, and 900 ruptures.} \label{rupture_fig}
\end{figure}

In Fig. \ref{rupture_fig}, we show a sample path of the rupture process with 1067 initial cells and 900 ruptures.  Initially, the cells produced from ruptures remain mostly convex, but at around 500 ruptures more irregular shapes start to form.  At the end of the process, when about 15\% of cells remain, the vast majority of the foam is covered with highly nonconvex polygons, with relatively small 3- and 4-gons interspersed between them.  The plot in Fig. \ref{turn_plots} shows how each of the turning disorders changes as the number of ruptures increases.  Initially, both the hexagonal and turning disorders are nearly zero (boundary cells prevent this from being exactly zero), while the circular disorders are approximately $d_2(C, R_6) = \sqrt 3\pi/18 = 0.3023$.  As ruptures increase, weighted and unweighted disorders diverge, as larger cells tend to be more erratic.  After 500 ruptures, circular, hexagonal, and regular disorders look largely the same.  As expected, values for all disorders after 900 ruptures are larger than those found in the T1 process (Fig. \ref{rupture_plots}).

\begin{figure}
\centering
\includegraphics[width =\linewidth]{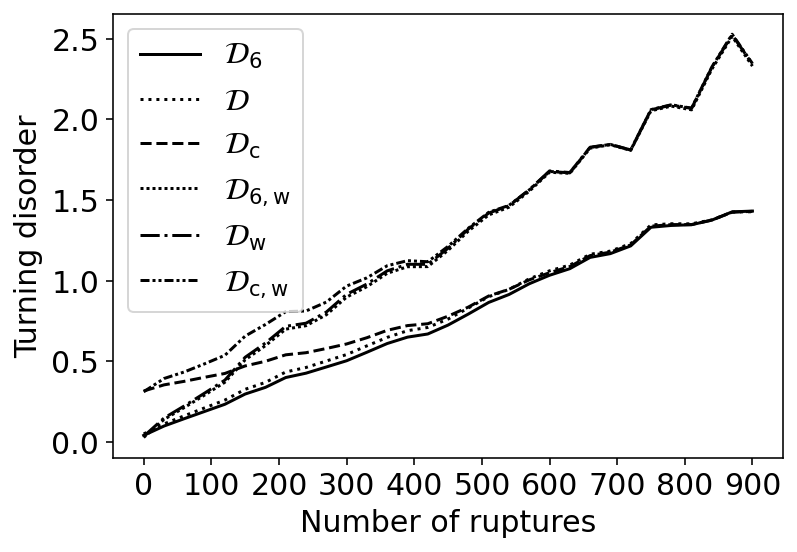}
\caption{Turning disorders for the polygonal rupturing process over 900 ruptures.} \label{rupture_plots}
\end{figure}

\section{Discussion} \label{sec:disc}

We have studied a computationally efficient method for measuring disorder in a planar network by taking averages of turning distances of each cell compared against ordered shapes.  From viewing several different examples of trivalent network microstructures, we found that turning disorders capture a wide range of geometric features.  For one example, we observed that Archimedean lattices are considered completely ordered when cells are compared against regular polygons, but have nonzero hexagonal and circular disorders which vary based on the types and relative areas of the lattice tiles.  From an analysis of the turning distance on regular polygons and circles, we were able to produce exact values for these lattices. We have also analyzed turning disorders for two classes of stochastic processes on geometric networks which are increasingly irregular. For aged spring networks under T1 moves, the network's area is populated with large, approximately round shapes, causing area-weighted disorders to remain relatively constant, while unweighted disorders increase due to the prevalence of smaller cells with higher aspect ratios. Large cells in aged rupture networks, on the other hand, are in general nonconvex, producing larger disorders which are approximately the same under all ordered shapes.   Several of the formulas derived in Sec. \ref{sec:compspec} were also used for the network processes.  In particular, we used formula (\ref{circturn}) in calculating distances between circles and general polygons.

The formulas (\ref{regdist})-(\ref{circdist}) for turning disorder do not consider the variation of turning distances across the network as an aspect of disorder.  One approach for producing an approximate empirical spatial disorder would sample disorders over small rectangular subsets of the microstructure.  A similar idea was employed in  Miley et al.\ \cite{miley2024metric} to describe a metric on the space of microstructures.  Here,  subimages of two microstructures, called \textit{windows}, are sampled and compared with the Wasserstein distance.  A spatial notion of disorder can also be paired with other geometric descriptors of microstructure in machine learning prediction problems.  Recent success toward this end have used graph neural networks (GNNs) for predicting magnetostriction \cite{dai2021graph} and the emergence of abnormal grain growth \cite{cohn2024graph}.  Datasets for these problems consist of the network's adjacency matrix along with a feature vector for geometric information such as grain size and misorientation. For future work, we hope to analyze how additional metrics like the turning disorders could enhance algorithms for predicting properties of materials.

\backmatter

\section{Declarations}

\textbf{Funding}: The work of all authors is partially supported  by the National Science Foundation under Grant No. 2316289. The work of J.K. and A.D is partially supported by NASA
under Grant No. 80NSSC20M0097.

\textbf{Conflicts of interest/Competing interests}:  The authors have no competing interests to declare that are relevant to the content of this article.

\textbf{Availability of data}: The datasets used and analyzed during the current study are available from the
corresponding author on reasonable request.

\textbf{Authors' contributions}: J.K. wrote the manuscript.  J.K., R.L, and B.S.   performed computational experiments.  A.D, J.K. and A.U. derived theoretical results.

\appendix

\section{Continuity under vertex perturbation}

A natural perturbation for a polygon $P$ is to adjust one of its vertices by a small amount.  We should expect for this perturbation to be continuous with respect to the $d_p$ metric.  This is indeed the case, and in fact the continuity is locally Lipschitz for $p = 1$.

\begin{proposition}
    \label{turnlip}
    Let $P$ be a unit-perimeter polygon with finitely many sides. Let $\tilde P$ be a polygon created from perturbing a vertex in $P$ by  a distance of $\varepsilon$.  Then 
    \begin{enumerate}
        \item For $p > 0$, the perturbation is continuous under the $p$-turning distance, meaning that 
        \begin{equation}
           d_p(P, \tilde P) \rightarrow 0 \quad \hbox{ as } \quad \varepsilon \rightarrow 0. 
        \end{equation}
        \item For $p= 1$, the continuity is locally Lipschitz for small perturbations. Thus, there exist $\varepsilon_0$ and $K>0$ such that for $\varepsilon < \varepsilon_0$, 
    \begin{equation}
        d_1(P,\tilde P) \le K\varepsilon.
    \end{equation} 
        
    \end{enumerate}
       
\end{proposition}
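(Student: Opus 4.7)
The plan is to upper-bound $d_p(P, \tilde P)$ by committing to a specific alignment in the minimization (\ref{maindist})---namely $t = 0$ and $\theta = 0$---and estimating $\|\Theta_P - \Theta_{\tilde P}\|_{L^p[0,1]}$ directly. Let the perturbed vertex be $v_k$, with $|v_k - \tilde v_k| \le \varepsilon$. Only the two edges incident to $v_k$ change length, and only the three exterior angles at $v_{k-1}, v_k, v_{k+1}$ change. For $\varepsilon$ small enough that the unperturbed edges adjacent to $v_k$ remain bounded below by some $L > 0$, each edge-length change is at most $\varepsilon$ and each angle change is $O(\varepsilon/L)$. Crucially, the three angle changes must sum to zero since the total turning remains $2\pi$, so the cumulative turning at every vertex from $v_{k+2}$ onward is preserved exactly. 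After dividing $\tilde P$'s lengths by $1 + \delta$ (with $|\delta| \le 2\varepsilon$) to restore unit perimeter, every cumulative arc length, and hence every jump location of $\Theta_{\tilde P}$, differs from the corresponding location in $\Theta_P$ by $O(\varepsilon)$ uniformly in the vertex index.

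With $t = \theta = 0$, the pointwise discrepancy $|\Theta_P(s) - \Theta_{\tilde P}(s)|$ is supported on two disjoint types of regions. The first is a union of at most $n$ ``mismatch windows'' of width $O(\varepsilon)$ surrounding each shifted jump, on which the two step functions differ by at most the corresponding jump size (bounded by $\pi$). The second is the bounded subinterval of $[0,1]$ between the arcs through $v_{k-1}$ and $v_{k+1}$, on which the level difference is $O(\varepsilon)$. Summing the $L^p$ contributions of the two types of regions gives
\begin{equation*}
\int_0^1 |\Theta_P(s) - \Theta_{\tilde P}(s)|^p \, ds \le C_1\, \varepsilon + C_2\, \varepsilon^p,
\end{equation*}
where $C_1, C_2$ depend only on $P$.

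For $p=1$, both terms on the right are $O(\varepsilon)$, yielding $d_1(P, \tilde P) \le K\varepsilon$ and proving (2). For general $p > 0$, the right-hand side vanishes as $\varepsilon \to 0$, which proves (1), with the stronger quantitative bound $d_p(P, \tilde P) = O\!\left(\varepsilon^{\min(1, 1/p)}\right)$.

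The main obstacle will be an honest accounting of all the simultaneous first-order effects: the two edge-length changes, the three angle changes, the global perimeter rescaling, and the consequent uniform $O(\varepsilon)$ drift in every jump location (which must be shown to be $O(\varepsilon)$ rather than $O(n\varepsilon)$, since the multiplicative rescaling factor $1/(1+\delta)$ acts on cumulative positions already bounded by $1$). Establishing the $O(\varepsilon/L)$ bound on the angle changes at $v_{k-1}, v_k, v_{k+1}$---most cleanly via a first-order expansion of the arctangent at the unperturbed edge directions---requires the adjacent edges to be bounded below, which is precisely what restricts the Lipschitz estimate to the local regime $\varepsilon < \varepsilon_0$ rather than giving a global bound.
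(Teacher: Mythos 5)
Your proposal takes essentially the same route as the paper's proof: bound $d_p(P,\tilde P)$ by the $L^p$ norm at the fixed alignment $t=\theta=0$, observe that only the two edges incident to the perturbed vertex change their turning values (by an amount controlled by $\tan^{-1}(\varepsilon/\ell)$ for incident edge lengths $\ell$) while every jump location drifts by at most $O(\varepsilon)$, and sum the resulting $O(\varepsilon)+O(\varepsilon^p)$ contributions to get the rate $\varepsilon^{\min(1,1/p)}$. The paper organizes the same estimate slightly differently — tracking the cumulative edge directions $\theta_i$ (only two of which change) rather than the three exterior angles — but the decomposition into jump-location mismatch windows plus perturbed-level intervals, and the reason the bound is only local (incident edges bounded below), are identical.
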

\begin{proof}

    Without loss of generality, we can assume that a polygon $P$ with $n \ge 3$ sides has an edge of length $\ell$ with one vertex at the origin and the other vertex $v$  on the positive $x$-axis.  We set the turning functions $f$ of $P$  and $\tilde f$  of $\tilde P$ to start tracing at the origin.

    We note that the right hand side of  (\ref{maindist}) can be bounded by setting $t = 0$ and $\theta = 0$, which is precisely the $L^p$ norm, meaning
    \begin{equation}
         d_p(f,g) \le   \|f-g\|_{p}. \label{dlesslp}
    \end{equation}
Continuity then follows from showing that turning functions vary continuously in $L^p$ from perturbing a single vertex.  To see this, we write the turning functions using indicator functions, with
\begin{equation}
f(s) = \sum_{i = 0}^{n-1} \theta_i \mathbf 1_{s\in [s_i, s_{i+1})}, \qquad \tilde f(s) = \sum_{i = 0}^{n-1} \tilde \theta_i \mathbf 1_{s\in [\tilde s_i, \tilde s_{i+1})},
\end{equation}
where $s_0 = \tilde s_0 = 0$, $s_n = \tilde s_n = 1$. Here, an indicator $\mathbf 1_A$ is equal to $1$ if $A$ holds, and is equal to $0$ otherwise.  Perturbing a single vertex only affects the turning angles of the two edges containing the vertex $v$, so $\theta_i = \tilde \theta_i$ for $ i \ge 2$.  Furthermore, a vertex perturbed by distance $\varepsilon$ can change its total perimeter, and also the distance between jump points in its turning function, by at most $2\varepsilon$.  We can then bound $|s_i- \tilde s_i| \le 2\varepsilon$ for $i = 2, \dots, n-1$.

To compare $\theta_0 = 0$ and $\tilde \theta_0$, we note that the greatest change in the turning angle occurs when $v = (\ell_1,0)$ is perturbed vertically to position $(\ell_1, \varepsilon)$.  Thus we can bound $|\theta_0-\tilde \theta_0| = |\tilde\theta_0| \le \tan^{-1}(\varepsilon/\ell_1)$.  Through an appropriate change of coordinates, a similar argument can be made to show $|\theta_1 - \tilde \theta_1| \le \tan^{-1}(\varepsilon/\ell_2)$.
Thus, we can bound the $L^p$ norm by
\begin{align}
\| f - \tilde f\|_p^p &\le 
\sum_{i = 1}^{n-1} |s_i - \tilde s_i| \|f- \tilde f\|_\infty ^p+  \ell_1|\theta_1 - \tilde \theta_1|^p+ \ell_2|\theta_2 - \tilde \theta_2|^p 
\\ &\le (n-1)(2\varepsilon)\|f- \tilde f\|_\infty ^p+ \ell_1\tan^{-1}(\varepsilon/\ell_1)^p+\ell_2\tan^{-1}(\varepsilon/\ell_2)^p.
\end{align}
Clearly polygons have turning functions with finite $L_\infty$ norms, and thus $\| f - \tilde f\|_p \rightarrow 0$ as $\varepsilon \rightarrow 0$. This shows continuity of perturbations for  $p\ge 1$.  When $p = 1$, for a sufficiently small $\varepsilon_0>0$ there exists a constant $K(P)>0$ such that for $\varepsilon< \varepsilon_0$, we have
\begin{align}
\| f - \tilde f\|_1 
\le (n-1)(2\varepsilon)(4 \pi)+ \ell_1\tan^{-1}(\varepsilon/\ell_1)+\ell_2\tan^{-1}(\varepsilon/\ell_2)  \le K \varepsilon,
\end{align}
where we use that $\tan^{-1}(x)/x \rightarrow 1$ as $x \rightarrow 0$.  This establishes local Lipschitz continuity.

\end{proof}

We conclude with a simple counterexample with $p>1$ where Lipschitz continuity does not hold.  Consider a class of right triangles $T_a$ with  legs of length $3$ and $a$.  These have perimeters of $P_a = 3+a+ \sqrt{9+a^2}$, and have turning functions of 
\begin{equation} \label{turncounter}
    f(s; a) = \begin{cases}
        0 & s \in[0, 3/P_a), \\
        \pi/2 & s \in [3/P_a, (3+a)/P_a), \\
        \pi+  \tan^{-1}(a/3) & s \in [(3+a)/P_a, 1).
    \end{cases}
\end{equation}

We will consider perturbations of the 3-4-5 triangle by comparing $T_4$ and $T_{4+\varepsilon}$ and finding a lower bound $d_p(T_4, T_{4+\varepsilon})$ which scales as $\varepsilon^{1/p}$, and therefore cannot be Lipschitz continuous for $p>1$.  

To show this bound, we begin by noting  that the length of the side for which $f(s; a) = \pi/2$ is equal to $\ell_a = a/P_a$.  Using a first-order Taylor expansion for the square root, the difference of this side length for $T_{4+\varepsilon}$ and $T_4$ is equal to 
\begin{align}
   m_\varepsilon &:=  \ell_{4+\varepsilon} - \ell_4 = \frac{4+\varepsilon}{P_{4+\varepsilon}} - \frac{4}{12} = \frac{12+3\varepsilon- P_{4+\varepsilon}}{3P_{4+\varepsilon}} \\
     &= \frac 13 \left(  \frac{5+2\varepsilon- \sqrt{25+ 8\varepsilon+\varepsilon^2}}{ 7+ \varepsilon+ \sqrt{25+ 8\varepsilon+\varepsilon^2} }        \right) \sim  \frac{\varepsilon}{30}.
\end{align}
Here, we write $f(\varepsilon) \sim g(\varepsilon)$ if $\lim_{\varepsilon \rightarrow 0} \frac fg = 1$.

\begin{figure}
\centering
\includegraphics[width =\linewidth]{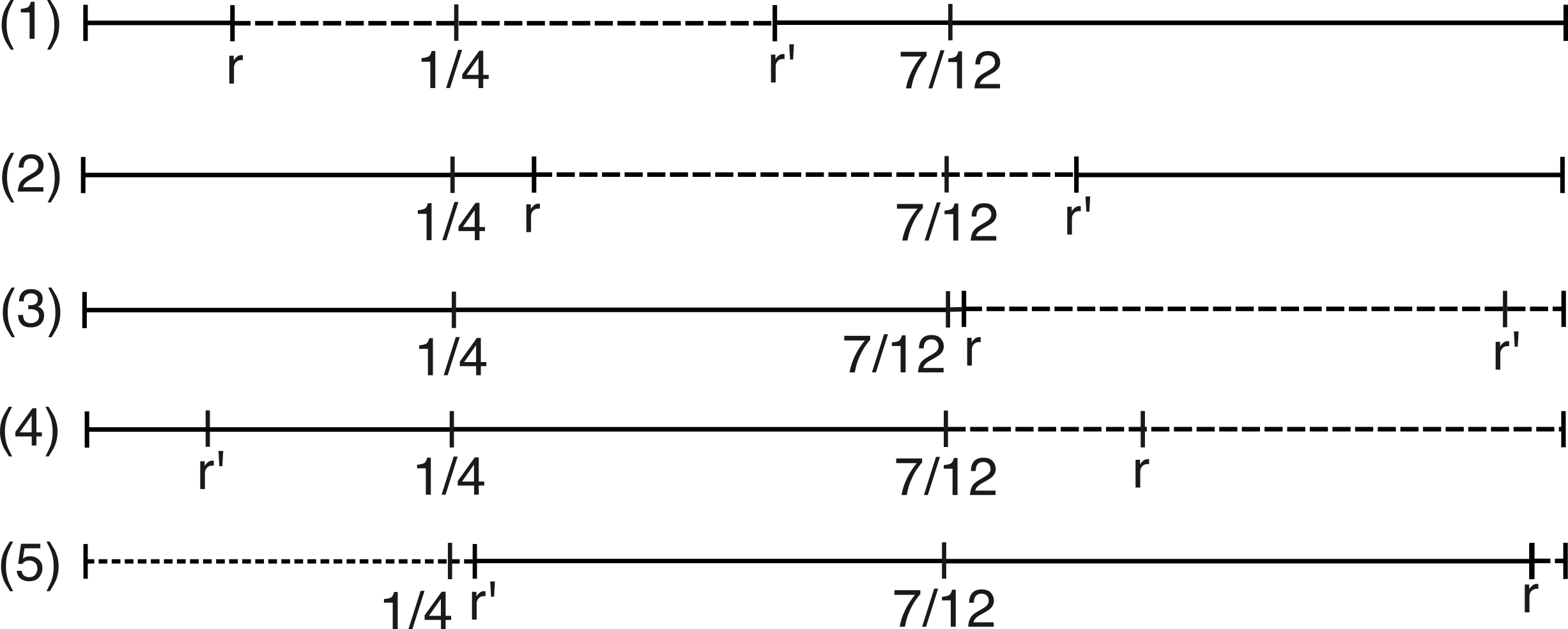}
\caption{Relative locations of jumps for varying values of $r$, where  (1) $r \in [0, 1/4- m_\varepsilon/2]$, (2) $r \in [1/4- m_\varepsilon/2, 7/12-m_\varepsilon/2]$, (3) $r \in [7/12-m_\varepsilon/2, 2/3-3m_\varepsilon/2]$, (4) $r \in [2/3-3m_\varepsilon/2, 1-m_\varepsilon/2]$, (5) $r \in [1-m_\varepsilon/2, 1]$.  Dashed lines denote intervals to the left and right of $q_r$ with length of at least $m_\varepsilon/2$.} \label{int_fig}
\end{figure}

For each $t\in [0,1]$, the set of jump points in $(0,1)$ for $F(s;t,\theta) := f(s;4)- f(s+t; 4+\varepsilon) + \theta$
is  $J_r = \{1/4, 7/12, r, r'\}$, where $r = t+3/P_{4+\varepsilon} \hbox{ mod  }1$ and $r' = r+\ell_{4+\varepsilon} \hbox{ mod  }1$.  We claim that for each $r\in [0,1]$ (and thus for each $t\in [0,1])$, there is a jump $q_r \in J_r$ such that (i) the jump size of $F$ is at least $\pi/4$ and (ii) the interval length of the steps on both sides of the jump is at least $m_\varepsilon/2$.  This can be directly verified by breaking into cases.  Indeed, 
\begin{enumerate}
\item  For $r \in [0, 1/4- m_\varepsilon/2]$,  $q_r = 1/4$.  
\item For $r \in [1/4- m_\varepsilon/2, 7/12-m_\varepsilon/2]$,  $q_r =  7/12$.
\item For $r \in [7/12-m_\varepsilon/2, 2/3-3m_\varepsilon/2]$  $q_r = r'$. Note that the right endpoint is where $r' = 1-m_\varepsilon/2$. 
\item For $r \in [2/3-3m_\varepsilon/2, 1-m_\varepsilon/2]$, $q_r = r$.
\item For $r \in [1-m_\varepsilon/2, 1]$,  $q_r = 1/4$.
\end{enumerate}

See Fig.  \ref{int_fig} for a visualization of the relative ordering of the jump points for cases (1)-(5).  In each case, the jump at $q_r$ corresponds to a jump in one and only one of $f(s;4)$ or $f(s+t;4+\varepsilon)$.  In either of these functions, one can verify from (\ref{turncounter}) that all jumps have size of at least  $\pi/4$. Thus, in at least one of the intervals to the left or right of $q_r$, $F$ must have a constant value of magnitude at least $\pi/8$.  Denoting this interval $I_{r(t)}$, we then bound the turning distance by
\begin{align}
d_p(f,\tilde f)  = \left( \min_{\theta \in \mathbb R, t \in [0,1]} \int_0^1 |F(s;t, \theta)|^p ds\right) ^ {1/p}
&\ge \left(\min_{\theta \in \mathbb R, t \in [0,1]}\int_{I_{r(t)}} \left(|F(s;t, \theta)|^p\right) ds \right)^{1/p} \\&\ge  \frac{\pi}{8}\left(\frac{m(\varepsilon)}2\right)^{1/p} \sim \frac{\pi}8 \left(\frac{\varepsilon}{60}\right)^{1/p}.
\end{align}

\bibliography{turn}

\end{document}